\newcommand{\diff}{\mathrm{d}} 
 \newcommand{\MSE}{{\text{MSE}}}
 \newcommand{\tol}{{\text{tol}}}
 \newcommand{\xt}{\mathbf{x}_t}
 \newcommand{\z}{\mathbf{z}}
 \newcommand{\y}{\mathbf{y}}
 \newcommand{\x}{\mathbf{x}}
  \newcommand{\n}{\mathbf{n}}
  \newcommand{\bfa}{\mathbf{a}}
  \newcommand{\bfb}{\mathbf{b}}
  \newcommand{\bfc}{\mathbf{c}}
 \newcommand{\w}{\mathbf{w}}
 \newcommand{\eps}{\mathbf{\epsilon}}
 \newcommand{\bE}{\mathbb{E}}
\newcommand{\bN}{\mathbb{N}}
\newcommand{\bP}{\mathbb{P}}
\DeclareMathOperator{\Cov}{Cov}
\DeclareMathOperator{\tr}{tr}
\DeclareMathOperator{\Var}{Var}
 \newtheorem{prop}{Proposition}
\icmltitlerunning{Controlled and Constrained Sampling with Diffusion Models via Initial Noise Perturbation}
\begin{document}

\twocolumn[
\icmltitle{CCS: Controllable and Constrained Sampling with Diffusion Models via Initial Noise Perturbation}

% It is OKAY to include author information, even for blind
% submissions: the style file will automatically remove it for you
% unless you've provided the [accepted] option to the icml2025
% package.

% List of affiliations: The first argument should be a (short)
% identifier you will use later to specify author affiliations
% Academic affiliations should list Department, University, City, Region, Country
% Industry affiliations should list Company, City, Region, Country
% Bowen Song, Zecheng Zhang, Zhaoxu Luo, Jason Hu, Wei Yuan, Jing Jia, Zhengxu Tang, Guanyang Wang, Liyue Shen
% You can specify symbols, otherwise they are numbered in order.
% Ideally, you should not use this facility. Affiliations will be numbered
% in order of appearance and this is the preferred way.
\icmlsetsymbol{equal}{*}

\begin{icmlauthorlist}
\icmlauthor{Bowen Song}{yyy}
\icmlauthor{Zecheng Zhang}{comp}
\icmlauthor{Zhaoxu Luo}{yyy}
\icmlauthor{Jason Hu}{yyy}
\icmlauthor{Wei Yuan}{xxx}
\icmlauthor{Jing Jia}{zzz}
\icmlauthor{Zhengxu Tang}{yyy}
%\icmlauthor{}{sch}
\icmlauthor{Guanyang Wang}{equal,xxx}
\icmlauthor{Liyue Shen}{equal,yyy}
%\icmlauthor{}{sch}
%\icmlauthor{}{sch}
\end{icmlauthorlist}

\icmlaffiliation{yyy}{Department of EECS, University of Michigan}
\icmlaffiliation{comp}{Kumo.AI}
\icmlaffiliation{xxx}{Department of Statistics, Rutgers University}
\icmlaffiliation{zzz}{Department of Computer Science, Rutgers University}

\icmlcorrespondingauthor{Liyue Shen}{liyues@umich.edu}
\icmlcorrespondingauthor{Guanyang Wang}{guanyang.wang@rutgers.edu}
\icmlcorrespondingauthor{Bowen Song}{bowenbw@umich.edu}
% You may provide any keywords that you
% find helpful for describing your paper; these are used to populate
% the "keywords" metadata in the PDF but will not be shown in the document
\icmlkeywords{Machine Learning, ICML}

\vskip 0.3in
]

% this must go after the closing bracket ] following \twocolumn[ ...

% This command actually creates the footnote in the first column
% listing the affiliations and the copyright notice.
% The command takes one argument, which is text to display at the start of the footnote.
% The \icmlEqualContribution command is standard text for equal contribution.
% Remove it (just {}) if you do not need this facility.

% \printAffiliationsAndNotice{}  % leave blank if no need to mention equal contribution
\printAffiliationsAndNotice{\icmlEqualContribution}

% otherwise use the standard text.

\begin{abstract}

Diffusion models have emerged as powerful tools for generative tasks, producing high-quality outputs across diverse domains. 
However, how the generated data responds to the initial noise perturbation in diffusion models remains under-explored, which hinders understanding the controllability of the sampling process. 
In this work, we first observe an interesting phenomenon: the relationship between the change of generation outputs and the scale of initial noise perturbation is highly linear through the diffusion ODE sampling.
Then we provide both theoretical and empirical study to justify this linearity property of this input-output (\textit{noise-generation data}) relationship. 
Inspired by these new insights, we propose a novel \textbf{C}ontrollable and \textbf{C}onstrained \textbf{S}ampling method (\textbf{CCS}) together with a new controller algorithm for diffusion models to sample with desired statistical properties while preserving good sample quality. 
We perform extensive experiments to compare our proposed sampling approach with other methods on both sampling controllability and sampled data quality. 
Results show that our CCS method achieves more precisely controlled sampling while maintaining superior sample quality and diversity. 
\end{abstract}

% Specifically, we introduce a noise perturbation strategy on a high-dimensional sphere and dynamically adjust its scale to maintain the sampled data's mean near the target mean while ensuring that the average distance from the sampled data to the target mean remains within the desired range.

\section{Introduction}
\begin{figure*}
    \centering
    \includegraphics[width=0.82\linewidth]{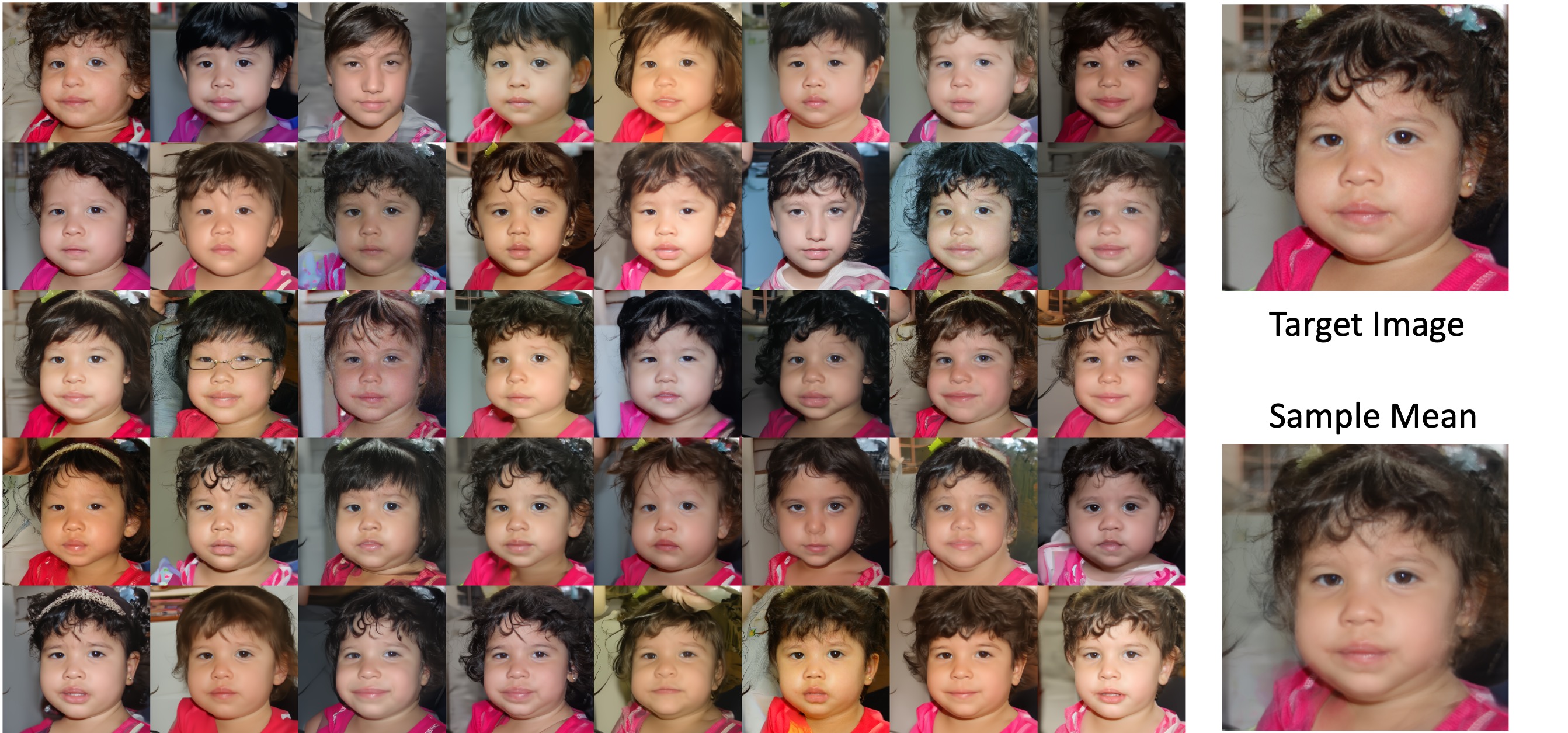}
    \vspace{-8pt}
    \caption{CCS Sampling with a target mean in the FFHQ validation dataset with $C_0 = 0.4$}
    \label{fig:CCS_FFHQ}
    \vspace{-16pt}
\end{figure*}
Recently, diffusion models achieve remarkable success in generative tasks such as text-to-image generation, audio synthesis~\cite{audio1, ldm}, as well as conditional generation tasks including inverse problem solving, image or video restoration, image editing, and translation~\cite{chen2024exploring, edit1, edit2, dps, resample, solve1, solve2, solve3, solve4, controlnet}. 
Despite these success, real-world scientific and engineering problems pose more challenges on requesting reliable and controllable generation as well as data privacy.

To tackle this, one important question is: \textit{How to control the distribution of samples from a diffusion model to match a specific target?} 
Previous works on controllable generation with diffusion models mostly focus on constraining the generation process sample-by-sample using either plug-and-play approaches~\cite{constrainedtts, solve1, solve2, solve3} or modifying the unconditional score~\cite{ldm, controlnet, control2, dps}, so that each sample can satisfy a measurement constraint. 
However, most prior works focus on per-sample control, with limited exploration of how to regulate the overall distribution of generated samples to meet specific statistical constraints, which is a crucial requirement in differential privacy~\cite{differentialprivacy}.
This inspires the novel task for controllable and constrained sampling we are targeting in this paper.
% To the best of our knowledge, we are the first to study this problem with diffusion models.
% To control the sampling, we exploit initial noise control. 
Considering the unique mechanism in diffusion sampling, we are motivated to exploit the initial noise control by studying this key question:
% We try to answer another question: 
\textit{How do the initial noise perturbations affect the generated samples in diffusion models?}
Previous works~\cite{loco, subspacecluster} suggest that the learned posterior mean predictor function is locally linear with perturbation among a certain range of timesteps for diffusion models. 
However, this linearity cannot be applied to every timestep nor to the samples of diffusion models. 
From a new perspective, this work sheds lights on the relationship between input noise perturbations and generation data in diffusion models, by proposing a training-free approach.

First of all, we observe an interesting phenomenon that when using denoising diffusion implicit models (DDIM) sampling, the initial noise has a highly linear effect on the generation data at small or moderate scales.
% in deterministic DDIM sampling. 
Motivated by this observation, our study tries to justify this linearity property via initial noise perturbation theoretically and empirically. 

Based on the spherical interpolation to perturb the initial noise vector, we propose a novel controllable and constrained sampling method (CCS) together with a new controller algorithm for diffusion models to sample with desired statistical properties while preserving high quality and adjustable diversity.
% we develop a novel sampling and a novel controller algorithm to control the statistical properties of the samples by spherical perturbation to the initial noise with solid theoretical justification. 

Furthermore, we conduct extensive experiments to validate the linearity phenomenon and then investigate the controllability performance of our proposed CCS method by generating images centered around a specified target mean image with a certain distance.
% to sample towards a target mean and target mean-squared error (MSE). 
Results demonstrate the superiority of our CCS method in both controllability and sampled image quality compared with baseline methods. 
Moreover, we show the potential of proposed CCS sampling for broader applications including image editing.

Our contributions can be summarized as below:
\begin{itemize}[noitemsep, topsep=0pt] 
    \item To the best of our knowledge, we for the first time investigate a novel problem of controllable and constrained diffusion sampling, given constraints on certain statistical properties while preserving high sample quality.
    \item Motivated by our new finding on the highly linear relationship between the initial noise and generated samples, we propose an innovative noise perturbation method with a controller algorithm for constrained sampling around a target mean with a specified distance, supported by solid theoretical and empirical justifications.
    \item Extensive experiments on three datasets with both pixel and latent diffusion models, validate our findings and theoretical results about the linearity phenomenon and proposed algorithm. 
    Results demonstrate the superior performance of our algorithm in achieving precise controllability within a constrained sampling framework.
\end{itemize}

\section{Background}

\subsection{Diffusion Models}

Diffusion models consists of a forward process that gradually adds noise to a clean image, and a reverse process that denoises the noisy images~\citep{song2020denoising, song2019generative}. 
The forward model is given by $\x_t = \x_{t-1} - 0.5\beta_t \Delta t\x_{t-1} + \sqrt{\beta_t} \Delta t \mathbf{\omega}$ where $\mathbf{\omega} \in \bN(0,I)$ and $\beta(t)$ is the noise schedule of the process.
The distribution of $\x_0$ is the clean data distribution, while the distribution of $\x_T$ is approximately a standard Gaussian distribution. When we set $\Delta t \rightarrow 0$, the forward model becomes $\diff\x_t = -0.5 \beta_t \x_t \diff t + \sqrt{\beta_t}\diff\omega_t$, which is a stochastic differential equation (SDE). 
The reverse of this SDE is given by:
\begin{equation}
    \diff x_t = \left( -\frac{\beta(t)}{2} - \beta(t) \nabla_{\x_t} \log p_t(\x_t) \right) dt + \sqrt{\beta(t)}d \overline{\mathbf{\omega}}.\nonumber
\end{equation}
One can training a neural network to learn the score function $\nabla_{\x_t} \log p_t(\x_t)$.
However, this formulation involves running many timesteps with high randomness. We can also compute the equivalent Ordinary Differential Equation (ODE) form to the SDE, which has the same marginal distribution of $p(\x_t)$. 
A sampling process, called denoising diffusion implicit models (DDIM), modifies the forward process to be non-markovian, so as to form a deterministic probability-flow ODE for the reverse process~\cite{song2021denoising}. 
In this way, we are able to achieve significant speed-up sampling. More discussion on this can be found in Section~\ref{sec: linear change perturbation}.

\subsection{Constrained Generation with Diffusion Models}

Constrained generation requires to sample $\x_0$ subject to certain conditions or measurements $\y$. The conditional score at $T$ can be computed by the Bayes rule, such that \begin{equation}
\label{cg}
    \nabla_{\mathbf{x}_t} \log p_t(\mathbf{x}_t | \mathbf{y}) = \nabla_{\mathbf{x}_t} \log p_t(\mathbf{x}_t) + \nabla_{\mathbf{x}_t} \log p_t(\mathbf{y} | \mathbf{x}_t).
\end{equation}
The second term can be computed through classifier guidance~\cite{beatGAN}, where an external classifier is trained for $p_0(\y|\x_0)$ or $p_t(\y|\x_t)$, and then can be plug into the diffusion model through Eq.~\ref{cg}. Diffusion posterior sampling~\cite{dps} further refines this formulation by proposing to perform posterior sampling with the approximation of $p(\y | \x_t) \approx p(\y | \hat{\x}_0)$, where $\hat{\x}_0$ is the Minimum Mean Square Error (MMSE) estimator of $\mathbf{x}_0$ based on $\x_t$.

Another line of works exploit hard consistency, which projects the intermediate noise to a measurement-consistent space during sampling via optimization and plug-and-play \cite{mcg, solve1, constrainedtts, resample}. However, the projection term can damage the sample quality~\cite{dps}. 
However, these works all target on controlling each individual sample.
To our best knowledge, few works explore how to control the distribution of generated samples to match certain statistical constraints, such as centered around a specified target mean with certain distance,
% how to control the statistical properties of the samples such as mean and MSE in diffusion sampling, 
which is the target for this work.

\subsection{Noise Perturbation in Diffusion Models}

Noise adjustment for diffusion models has been explored in image editing, video generation, and other applications~\cite{edit1, edit2, styleid, noise0, noise1, noisediffusion} for changing the style or other properties of the generated data. 
% \sly{give a summary about how previous works propose to do with noise perturbation.}
However, a principled study on how the noise adjustment affects the samples is limited in diffusion models. 
Recently, \citet{loco, subspacecluster} observe the local linearity and low-rankness of the posterior mean predictor $\mathbf{\hat{x}}_0$ based on $\x_t$ in large timesteps, but this study cannot extend to the analysis of generated samples. 
In this work, we investigate how initial noise perturbations affect the samples generated from the diffusion model in the ODE sampling setting.

\begin{figure*}
    \centering
    \includegraphics[width=0.9\linewidth]{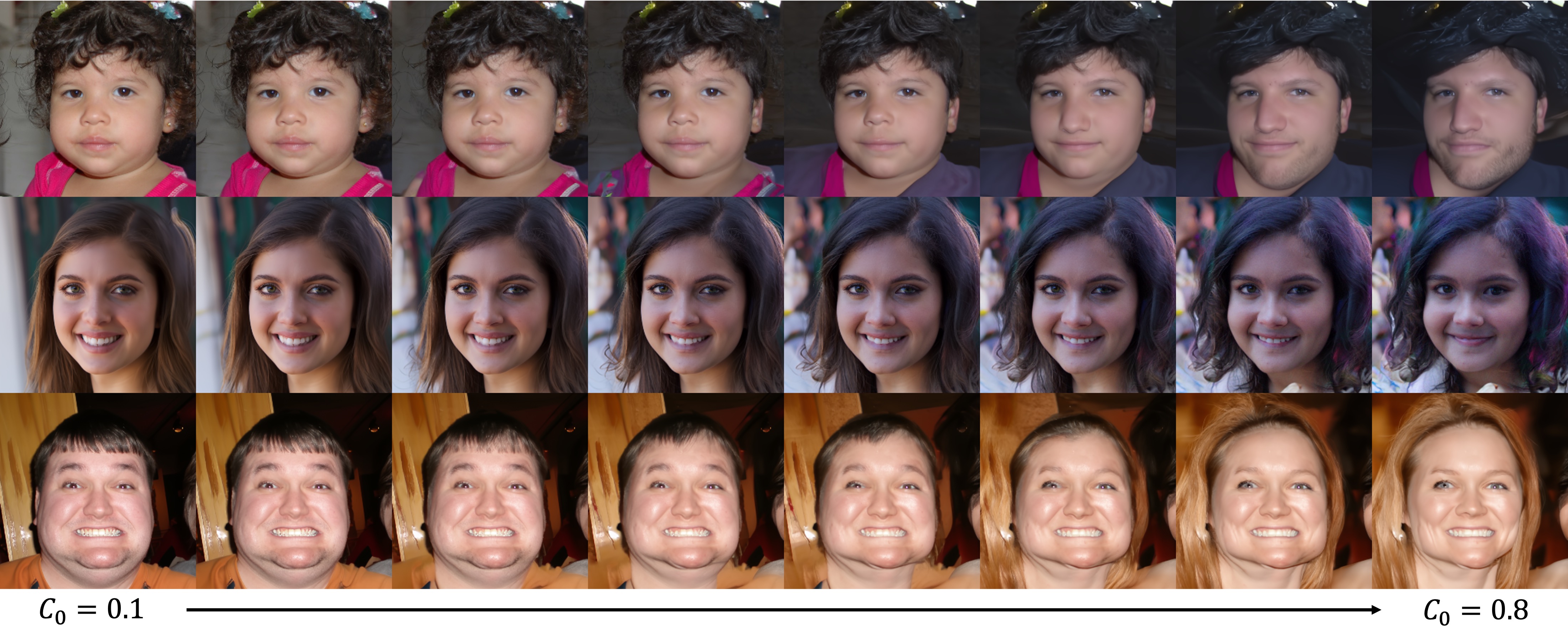}
    \caption{Qualitative demonstration of linearity when increasing scale of perturbation. For each target mean, we sample a perturbation noise and gradually increase $C_0$ (0.1 at a time) to increase the magnitude of the perturbation.}
    \label{fig:qualitativelinear}
\end{figure*}

\section{Influence from Initial Noise Perturbation}\label{sec: linear change perturbation}

This section analyzes how small perturbations in the input noise affect the generation data under the DDIM sampling framework. We show that a slight change in the initial noise leads to an approximately linear variation in the sampled images. This result is quantified from two perspectives: the discretized DDIM sampling process \cite{song2021denoising} and the associated continuous-time ODE. 
Our mathematical analysis relies on minimal assumptions, which also serves as the foundation for our proposed CCS algorithm in Section~\ref{sec: sampling with control}.
\vspace{-20px}
\subsection{Preliminary: DDIM Sampling}
Fix the total sampling timesteps $T$ and an initialization noise sample $\x_{T}$, \citet{song2021denoising} generates samples from the backward process $\x_T \rightarrow \x_{T-1} \rightarrow \ldots \rightarrow \x_0$ using the following recursive formula:
\vspace{-20px}
\begin{equation}\label{eqn: DDIM}
\begin{aligned}
    \x_{t-1} = \sqrt{\alpha_{t-1}} 
\left(
\frac{\x_t - \sqrt{1-\alpha_t} \, \epsilon_{\theta}^{(t)}(\x_t)}{\sqrt{\alpha_t}}
\right) \\
 + \sqrt{1-\alpha_{t-1} - \sigma_t^2} \, \epsilon_{\theta}^{(t)}(\x_t)
+ \sigma_t \epsilon_t,
\end{aligned}
\end{equation}
where $\alpha_t$ corresponds to the noise schedule in DDPM, $\epsilon_{\theta}^{(t)}(\x_t)$ is the predicted noise given by the pre-trained neural network with parameter $\theta$, $\epsilon_t$ is the standard Gaussian noise, and $\sigma_t$ is a hyperparameter. 
The DDIM sampler~\cite{song2021denoising} sets $\sigma_t = 0$ to make the backward process deterministic once $\x_T$ is fixed. It is known (\textit{e.g.}, eq (11) of \citet{dhariwal2021diffusion}) that predicting the noise is equivalent to predicting the score function up to a normalizing factor, \textit{i.e.}, $
\epsilon_\theta^{(t)}\left(\xt\right) \approx -\sqrt{1- \alpha_t}\nabla_{\xt}\log p_{t}(\xt)$. By setting $\sigma_t = 0$ and substituting $\epsilon_\theta^{(t)}$ with its corresponding estimand, we obtain the \textit{idealized DDIM process}:
\begin{equation}\label{eqn:idealized ddim}
\begin{aligned}
\x_{t-1} &= \sqrt{\alpha_{t-1}} 
\left(
\frac{\xt + (1-\alpha_t \,) \nabla_{\xt}\log p_{t}(\xt)}{\sqrt{\alpha_t}}
\right)
 \\
 & ~~~~- \sqrt{(1-\alpha_{t-1})(1-\alpha_t)} \, \nabla_{\xt}\log p_{t}(\xt).
\end{aligned}
\end{equation}

If we treat the index $t$ as a continuous variable (and rewrite $\alpha_t$ as $\alpha(t)$ to avoid confusion), it is known in eq (14) of \cite{song2020denoising} that DDIM is the Euler-discretization of the following (backward) ODE:
\begin{align*}
    \diff \bar{\x}_t = \epsilon_\theta^{(t)}\left(\frac{\bar{\x}_t}{\sqrt{\sigma^2(t) + 1}}\right)\diff \sigma(t),
\end{align*}
where $\bar{\x}_t = \x_t/\sqrt{\alpha(t)}, \sigma(t) := \sqrt{(1- \alpha(t))/\alpha(t)}.$ 
Thus, we can similarly write the idealized ODE as:
\begin{align}\label{eqn:idealized ODE}
    \diff \bar{\x}_t = -\sqrt{1-\alpha(t)}\nabla\log p_t\left(\frac{\bar{\x}_t}{\sqrt{\sigma^2(t) + 1}}\right)\diff \sigma(t).
\end{align}

We now examine how a small perturbation $\x_T \rightarrow \x_T + \lambda \Delta \x$ would affect the output sample at time $t = 0$ through both the discrete \eqref{eqn:idealized ddim} and continuous time \eqref{eqn:idealized ODE} perspectives.

\textbf{Related work:} Theorem 1 in \citet{chen2024exploring} presents a related result on the impact of initial noise perturbation. Our study differs from theirs in a variety of aspects. Firstly, they study $\bE[\x_0 \mid \x_t + \lambda \Delta\x] - \bE[\x_0 \mid \x_t] $ under the (stochastic) diffusion process.  In contrast, we directly examine the output \(\x_0\) given the initializations \(\x_t\) and \(\x_t + \lambda \Delta\x\) under the deterministic DDIM \eqref{eqn:idealized ddim} or the ODE process \eqref{eqn:idealized ODE}. Secondly, \cite{chen2024exploring} assumes that \(p_0\) is a low-rank mixture of Gaussian distributions, which allows for an analytical solution for \(p_t\). In contrast, our weaker assumptions render \(p_t\) analytically intractable. Consequently, we use very different techniques, such as ODE stability theory and Grönwall's inequality, to study the system's behavior.

\subsection{DDIM Discretization}\label{subsec:linear, DDIM}
We simplify notations and write the idealized DDIM \eqref{eqn:idealized ddim} as
\begin{align*}
\x_{t-1} := \eta_t \x_t + \lambda_t \nabla_{\xt}\log p_{t}(\xt),
\end{align*}
where $\eta_t = \sqrt{\alpha_{t-1}^{-1}\alpha_t}$ and $\lambda_t = \sqrt{\alpha_{t-1}^{-1}\alpha_t}(1-\alpha_t) - \sqrt{(1-\alpha_{t-1})(1-\alpha_t)}$.

Fix $T$, we are interested in studying $\x_0(\x_{T}, T)$ and $\x_0(\x_{T} + \lambda\Delta\x, T)$, where $\Delta\x$ is a unit direction and $\lambda$ is a (small) real number. 
The notation $\x_0(a,t)$ stands for the endpoint $\x_0$ by applying the idealized DDIM procedure with $\x_t = a$ at timestep $t$. We have:

\begin{prop}\label{prop: linearity, DDIM}
    With all the notations defined as above, assuming  $\log p_t$ is second-order differentiable for every $t\geq 1$, there exists a matrix-valued function $\gamma_0$ such that
    \begin{align*}
    \x_0(\x_T + \lambda\Delta\x, T) = \x_0(\x_T) + \lambda\gamma_0(\x_T)\Delta\x + o(\lambda). 
\end{align*}
In turn, 
\begin{align*}
    \lVert  \x_0(\x_T + \lambda\Delta\x, T) -  \x_0(\x_T, T)\rVert_2 \\
     = \lVert \lambda \gamma_0(\x_T)\Delta\x \rVert_2 + o(\lambda). 
\end{align*}
\end{prop}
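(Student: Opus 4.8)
The plan is to prove the first-order expansion by induction on the discrete backward steps, propagating a linear sensitivity matrix through the DDIM recursion. Write $\x_0(a,t)$ for the endpoint obtained by running the idealized DDIM \eqref{eqn:idealized ddim} from $\x_t = a$. Since each step is $\x_{t-1} = \eta_t \x_t + \lambda_t \nabla_{\xt}\log p_t(\xt)$, the map $\x_t \mapsto \x_{t-1}$ is a fixed (parameter-free) function $F_t(\x_t) := \eta_t \x_t + \lambda_t \nabla\log p_t(\x_t)$, and $\x_0(\cdot,T)$ is just the composition $F_1 \circ F_2 \circ \cdots \circ F_T$. By the assumption that $\log p_t$ is twice differentiable for every $t \geq 1$, each $F_t$ is $C^1$ with Jacobian $DF_t(\x_t) = \eta_t I + \lambda_t \nabla^2 \log p_t(\x_t)$, which is continuous, hence locally Lipschitz, near the nominal trajectory.

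First I would set up the notation for the nominal (unperturbed) trajectory $\x_T, \x_{T-1}, \ldots, \x_0$ generated from $\x_T$, and the perturbed trajectory $\x_T^\lambda := \x_T + \lambda \Delta\x$ with its iterates $\x_t^\lambda$. Then I would invoke the chain rule for Fréchet derivatives: the composition of finitely many $C^1$ maps is $C^1$, so $a \mapsto \x_0(a,T)$ is differentiable at $\x_T$ with derivative
\begin{align*}
\gamma_0(\x_T) := DF_1(\x_1)\, DF_2(\x_2) \cdots DF_T(\x_T) = \prod_{t=T}^{1} \bigl(\eta_t I + \lambda_t \nabla^2 \log p_t(\x_t)\bigr),
\end{align*}
evaluated along the nominal trajectory. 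Differentiability at $\x_T$ is exactly the statement that
\begin{align*}
\x_0(\x_T + \lambda\Delta\x, T) = \x_0(\x_T, T) + \lambda \gamma_0(\x_T)\Delta\x + o(\lambda)
\end{align*}
as $\lambda \to 0$, uniformly over the unit direction $\Delta\x$ (uniformity is automatic since $\Delta\x$ ranges over the compact unit sphere). For a more self-contained argument one can instead do the induction by hand: define $e_t := \x_t^\lambda - \x_t - \lambda \gamma_t \Delta\x$ where $\gamma_t := DF_{t+1}\cdots DF_T$, Taylor-expand $F_t$ around $\x_t$ to get $\x_{t-1}^\lambda - \x_{t-1} = DF_t(\x_t)(\x_t^\lambda - \x_t) + r_t$ with $\|r_t\| = o(\|\x_t^\lambda - \x_t\|)$, and check that $\|e_{t-1}\| \leq \|DF_t(\x_t)\|\,\|e_t\| + o(\lambda)$, so a finite iteration over $t = T, \ldots, 1$ gives $\|e_0\| = o(\lambda)$ because the number of steps $T$ is fixed.

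The second display of the proposition then follows immediately: by the reverse triangle inequality,
\begin{align*}
\bigl|\, \lVert \x_0(\x_T + \lambda\Delta\x, T) - \x_0(\x_T,T) \rVert_2 - \lVert \lambda \gamma_0(\x_T)\Delta\x \rVert_2 \,\bigr| \leq \lVert \x_0(\x_T + \lambda\Delta\x, T) - \x_0(\x_T,T) - \lambda\gamma_0(\x_T)\Delta\x \rVert_2 = o(\lambda),
\end{align*}
which is the claimed identity $\lVert \x_0(\x_T + \lambda\Delta\x, T) - \x_0(\x_T,T)\rVert_2 = \lVert \lambda\gamma_0(\x_T)\Delta\x\rVert_2 + o(\lambda)$.

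The main obstacle is mild but worth stating carefully: making sure the error terms are genuinely $o(\lambda)$ rather than $O(\lambda)$, and that they are controlled uniformly over $\Delta\x$ and over the $T$ steps. The delicate point is that the second-order Taylor remainder at step $t$ is $o(\|\x_t^\lambda - \x_t\|)$, and one must first establish the a priori bound $\|\x_t^\lambda - \x_t\| = O(\lambda)$ (which itself follows from the local Lipschitz property of each $F_t$, i.e. boundedness of $\eta_t I + \lambda_t \nabla^2\log p_t$ on a neighborhood of the nominal trajectory) before the remainders can be upgraded to $o(\lambda)$. Since $T$ is fixed and finite, no Grönwall-type argument is needed here — that machinery is reserved for the continuous-time ODE version in the next subsection — and the whole argument is just a finite composition of first-order Taylor expansions.
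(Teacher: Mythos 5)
Your proposal is correct and follows essentially the same route as the paper: both decompose the backward DDIM map into the composition of one-step maps $\x_t \mapsto \eta_t\x_t + \lambda_t\nabla\log p_t(\x_t)$, differentiate each step via the first-order Taylor expansion of the score with Jacobian $\eta_t I + \lambda_t H_t$, and chain the Jacobians across the finitely many steps (the paper carries out your ``by hand'' induction explicitly, with the same recursively defined $\gamma_t$). The only nitpick is that second-order differentiability of $\log p_t$ gives differentiability of each step map but not necessarily a continuous (hence locally Lipschitz) Jacobian as you assert; this is harmless since the chain rule and your error-propagation argument need only pointwise differentiability along the nominal trajectory.
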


Proposition \ref{prop: linearity, DDIM} shows that a linear perturbation of the input with magnitude $\lambda$ and direction $\Delta\x$ results in an approximately linear change in the output, with magnitude $\lvert \lambda \rvert \lVert \gamma_0(\x_T)\Delta\x \rVert_2$ and direction $\gamma_0(\x_T)\Delta\x$. Our assumption is based solely on the second-order smoothness of the score, which is weaker than most existing assumptions depending on the data  distribution $p_0$. For example, our assumptions hold under common conditions in the literature, such as the manifold hypothesis \cite{de2022convergence, song2019generative} or the mixture of (low-rank) Gaussian assumption \cite{gatmiry2024learning, chen2024exploring, chen2024learning}. \\
% Note that the approximation error is only dependent on the data distribution ($p(\x_t)$), when the $\nabla \text{log}(p(\x_t))$ term has a smaller second-order derivative, the linear approximation error is smaller. 
Furthermore, when at large $t$, $p_t$ is approximately Gaussian and $\nabla_{\x_t}\text{log}p_t(\x_t)$ is smooth, which leads to low linear approximation error. However, one might be concerned that the linear approximation error could grow significantly when \( t \) decreases and \( p_0 \) contains multiple clusters with low-density regions in between. Nevertheless, we now explain why this concern does not arise in practice. The coefficient $ f(t):= -\sqrt{\alpha_t}^{-1} \sqrt{\alpha_{t-1}(1 - \alpha_t)} + \sqrt{1 - \alpha_{t-1}}$ of $\epsilon_{\theta}^{(t)}(\mathbf{x}_t)$ in \eqref{eqn: DDIM}  is close to  $0$ for small $t$, as $\alpha_t\approx 1$. Moreover, the structure of the neural network \( \epsilon_\theta \)  ensures that the output is normalized and bounded in norm, so the change in output is also bounded. Consequently, for a small perturbation in \( \x_t \), we have \( \lVert f(t)(\epsilon_{\theta}^{(t)}(\x_t+\Delta x) - \epsilon_{\theta}^{(t)}(\x_t))\rVert_2 \approx 0 \) when \( t \) is small.

\subsection{ODE Stability}\label{subsec: linear, ODE}
Let $\bar \x_0(\x, T)$ be the solution of \eqref{eqn:idealized ODE} with initialization $\x_T = \x$ (i.e., $\bar\x_T = \x/\sqrt{\alpha(T)}$) at timestep $T$, and $x_0(\x, T) = \alpha(0)\bar\x_0(\x, T)$. With some technical assumptions that is detailed in Appendix \ref{subsec: proof of linear, ODE}, we have the following:
\begin{prop}\label{prop: linearity ODE}
    There exists a matrix-valued function $\psi_0$ such that:
    \begin{align*}
        \bar\x_0(\x_T +\lambda \Delta\x, T) = \bar\x_0(\x_T) + \lambda \psi_0(\x_T)\Delta\x + o(\lambda).
    \end{align*}
    In turn,
    \begin{align*}
        \x_0(\x_T +\lambda \Delta\x, T) = \x_0(\x_T) + \lambda\sqrt{\alpha(0)} \psi_0(\x_T)\Delta\x + o(\lambda).
    \end{align*}
\end{prop}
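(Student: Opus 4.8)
The plan is to read Proposition~\ref{prop: linearity ODE} as a special case of the classical theorem on $C^1$ dependence of ODE solutions on their initial data, applied to the idealized probability-flow ODE~\eqref{eqn:idealized ODE}. First I would eliminate the inconvenient parametrization: since $\sigma(t) = \sqrt{(1-\alpha(t))/\alpha(t)}$ is strictly monotone in $t$, the change of variable $s = \sigma(t)$ turns~\eqref{eqn:idealized ODE} into a standard system $\diff\bar\x/\diff s = G(s,\bar\x)$, and writing $F(\tau,y) := -\sqrt{1-\alpha(\tau)}\,\nabla\log p_\tau\!\left(y/\sqrt{\sigma^2(\tau)+1}\right) = -\sqrt{1-\alpha(\tau)}\,\nabla\log p_\tau(\sqrt{\alpha(\tau)}\,y)$ (using $\sigma^2+1 = 1/\alpha$), one has the integral form
\begin{align*}
\bar\x_t(\x_T+\lambda\Delta\x) = \frac{\x_T+\lambda\Delta\x}{\sqrt{\alpha(T)}} + \int_T^t F\!\left(\tau,\bar\x_\tau(\x_T+\lambda\Delta\x)\right)\diff\sigma(\tau).
\end{align*}
Under the technical assumptions recorded in Appendix~\ref{subsec: proof of linear, ODE} (continuity and local Lipschitzness of $y\mapsto F(\tau,y)$, plus integrability of the relevant derivatives against $\diff\sigma(\tau)$ along the trajectory), Picard--Lindel\"of gives a unique unperturbed solution $\bar\x_\cdot(\x_T)$, and the flow map $\x\mapsto\bar\x_0(\x,T)$ is $C^1$; its Jacobian at $\x_T$ is the matrix we will call $\psi_0(\x_T)$.

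Next I would identify $\psi_0(\x_T)$ explicitly via the first variational equation. Note $D_yF(\tau,y) = -\sqrt{\alpha(\tau)(1-\alpha(\tau))}\,(\nabla^2\log p_\tau)(\sqrt{\alpha(\tau)}\,y)$, so the Hessian of $\log p_\tau$ — exactly the object whose smoothness is assumed — governs the linearization. Differentiating the integral identity formally in $\x_T$, the matrix $\Psi(t) := \partial_{\x_T}\bar\x_t(\x_T)$ solves the linear equation
\begin{align*}
\Psi(t) = \frac{1}{\sqrt{\alpha(T)}}\,I + \int_T^t D_yF\!\left(\tau,\bar\x_\tau(\x_T)\right)\Psi(\tau)\,\diff\sigma(\tau),
\end{align*}
which, the integrand being integrable, has a unique solution (its state-transition / Peano--Baker matrix). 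Set $\psi_0(\x_T) := \Psi(0)$; then $\partial_\lambda\bar\x_t(\x_T+\lambda\Delta\x)\big|_{\lambda=0} = \Psi(t)\Delta\x$ and in particular $\partial_\lambda\bar\x_0\big|_{\lambda=0} = \psi_0(\x_T)\Delta\x$.

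Then I would promote this formal derivative to the stated first-order expansion with a genuine $o(\lambda)$ remainder by a Gr\"onwall estimate. Define $r_\lambda(t) := \bar\x_t(\x_T+\lambda\Delta\x) - \bar\x_t(\x_T) - \lambda\Psi(t)\Delta\x$, so $r_\lambda(T)=0$. Subtracting the integral identities for $\bar\x_t(\x_T+\lambda\Delta\x)$, $\bar\x_t(\x_T)$ and $\Psi(t)$ and Taylor-expanding $F$ in its spatial slot along the unperturbed path, the continuity of $D_yF$ yields a bound of the form
\begin{align*}
\lVert r_\lambda(t)\rVert_2 \le \int_0^T \left(\lVert D_yF(\tau,\bar\x_\tau(\x_T))\rVert\,\lVert r_\lambda(\tau)\rVert_2 + h(\lambda,\tau)\right)|\diff\sigma(\tau)|,
\end{align*}
where $h(\lambda,\tau) = o(\lambda)$ pointwise and is dominated by an integrable majorant. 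Gr\"onwall's inequality then gives $\sup_{t\in[0,T]}\lVert r_\lambda(t)\rVert_2 = o(\lambda)$; evaluating at $t=0$ is the first display of the proposition, and multiplying by $\sqrt{\alpha(0)}$ via $\x_0 = \sqrt{\alpha(0)}\,\bar\x_0$ gives the second. The norm corollary follows from the reverse triangle inequality, exactly as in Proposition~\ref{prop: linearity, DDIM}.

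The main obstacle is securing the regularity/integrability input as $t\to 0$: when $p_0$ concentrates on a low-dimensional manifold or has well-separated modes, $\nabla\log p_\tau$ and $\nabla^2\log p_\tau$ can blow up as $\tau\to 0$, so neither the local Lipschitz constant of $F$ nor the integrability of $\tau\mapsto\lVert D_yF(\tau,\bar\x_\tau(\x_T))\rVert$ against $\diff\sigma(\tau)$ is automatic. The saving grace is the prefactor $\sqrt{\alpha(\tau)(1-\alpha(\tau))}\to 0$ as $\tau\to 0$ — the ODE analogue of the vanishing coefficient $f(t)$ discussed after Proposition~\ref{prop: linearity, DDIM} — which must be shown to decay fast enough relative to $\sigma$-time to offset any growth of the Hessian; this is precisely what the technical assumptions of Appendix~\ref{subsec: proof of linear, ODE} encode, and checking them (or, empirically, arguing they hold approximately because the network $\epsilon_\theta$ is norm-bounded and smooth) is the crux of the argument.
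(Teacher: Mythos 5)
Your proposal is correct and follows the same overall route as the paper: rewrite \eqref{eqn:idealized ODE} as an autonomous-in-form system $\diff\bar\x_t = h(t,\bar\x_t)\,\diff t$, invoke Picard--Lindel\"of for well-posedness under the $C^1$ and uniform Lipschitz assumptions of Appendix~\ref{subsec: proof of linear, ODE}, and obtain $\psi_0$ as the Jacobian of the flow map with respect to the initialization. The one substantive difference is in how the $C^1$ dependence on initial data is justified: the paper simply cites the classical theorem (Theorem 3.1 of Hartman), whereas you re-derive it by writing down the first variational equation for $\Psi(t)=\partial_{\x_T}\bar\x_t(\x_T)$ and controlling the remainder $r_\lambda(t)$ with a Gr\"onwall estimate. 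Your version is more self-contained and has the added benefit of identifying $\psi_0(\x_T)$ explicitly as the state-transition matrix of the linearized system driven by $-\sqrt{\alpha(\tau)(1-\alpha(\tau))}\,\nabla^2\log p_\tau$ along the unperturbed trajectory (and it reuses the same Gr\"onwall machinery the paper deploys separately for Proposition~\ref{prop: at most linear ODE}); the paper's version is shorter at the cost of treating the dependence theorem as a black box. Your closing caveat about possible blow-up of the score and its Hessian as $t\to 0$ is well taken, but note that the paper's stated assumption --- a uniform-in-$t$ Lipschitz bound $\max_{t\in[0,T]}C(t)\leq C<\infty$ on $h(t,\cdot)$ --- simply rules this out by hypothesis rather than deriving it from the vanishing prefactor, so no additional work is needed there to match what the paper actually proves.
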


Proposition \ref{prop: linearity ODE} mirrors Proposition \ref{prop: linearity, DDIM} but is formulated in the continuous-time ODE setting. Its proof relies on ODE stability theory, showing that the output change is ``approximately linear" for sufficiently small \( \lambda \). Furthermore, under the same assumption, we establish that the change remains ``at most linear" for all \( \lambda \). The proof, which applies Grönwall's inequality, is provided in Appendix \ref{subsec: proof of linear, ODE}.

\begin{prop}\label{prop: at most linear ODE}
   With the same assumptions as above, there exists a constant $C(T)$ depending on $T$ such that for any $\lambda$:
    \begin{align*}
        \lVert \bar\x_0(\x_T +\lambda \Delta\x, T) - \bar\x_0(\x_T) \rVert_2 \leq C(T)\lvert \lambda \rvert \lVert \Delta \x \rVert_2.
    \end{align*}
\end{prop}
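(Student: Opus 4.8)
The plan is a Gr\"onwall-type estimate on the separation between the two ODE trajectories; crucially, this estimate does not require $\lambda$ to be small, which is exactly what upgrades the infinitesimal conclusion of Proposition~\ref{prop: linearity ODE} to the global ``at most linear'' bound claimed here. First I would reparametrize \eqref{eqn:idealized ODE} by the strictly monotone variable $s = \sigma(t)$, writing it as $\diff\bar\x = F(\bar\x,s)\,\diff s$ with $F(\bar\x,s) := -\sqrt{1-\alpha(t(s))}\,\nabla\log p_{t(s)}\!\left(\bar\x/\sqrt{s^2+1}\right)$, and let $\bar\x^{(1)}_s$ and $\bar\x^{(2)}_s$ denote the solutions started at $s_T := \sigma(T)$ from $(\x_T+\lambda\Delta\x)/\sqrt{\alpha(T)}$ and $\x_T/\sqrt{\alpha(T)}$ respectively. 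Set $u_s := \bar\x^{(1)}_s - \bar\x^{(2)}_s$, so that $u_{s_T} = \lambda\Delta\x/\sqrt{\alpha(T)}$ and $u_{\sigma(0)} = \bar\x_0(\x_T+\lambda\Delta\x,T) - \bar\x_0(\x_T,T)$.

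Next, writing each trajectory in integral form and subtracting gives, for $s \le s_T$,
\begin{align*}
u_s = u_{s_T} - \int_s^{s_T}\!\bigl( F(\bar\x^{(1)}_r, r) - F(\bar\x^{(2)}_r, r)\bigr)\,\diff r .
\end{align*}
The technical assumptions of Appendix~\ref{subsec: proof of linear, ODE} ensure that the spatial Jacobian $\partial_{\bar\x}F(\bar\x,s) = -\sqrt{1-\alpha(t(s))}\,(s^2+1)^{-1/2}\,\nabla^2\log p_{t(s)}(\,\cdot\,)$ is bounded in operator norm, uniformly over the region of space visited by both trajectories, by a function $L(s)$ with $\int_{\sigma(0)}^{\sigma(T)} L(r)\,\diff r < \infty$; hence $\lVert F(\bar\x^{(1)}_r,r) - F(\bar\x^{(2)}_r,r)\rVert_2 \le L(r)\,\lVert u_r\rVert_2$. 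Taking norms and applying Gr\"onwall's inequality to $v(s) := \lVert u_s\rVert_2$ yields $v(s) \le v(s_T)\,\exp\!\bigl(\int_s^{s_T} L(r)\,\diff r\bigr)$; evaluating at $s = \sigma(0)$, using $v(s_T) = \alpha(T)^{-1/2}\lvert\lambda\rvert\,\lVert\Delta\x\rVert_2$, and setting $C(T) := \alpha(T)^{-1/2}\exp\!\bigl(\int_{\sigma(0)}^{\sigma(T)} L(r)\,\diff r\bigr)$ gives the stated bound.

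The main obstacle is the uniform (or at least integrable) Lipschitz control $L(s)$ on $\partial_{\bar\x}F$: second-order differentiability of $\log p_t$ by itself gives only a \emph{local} Hessian bound, so one must either assume a global bound on $\lVert\nabla^2\log p_t\rVert$ (part of the technical assumptions), or first derive an a priori estimate confining $\lVert\bar\x_t\rVert_2$ --- and hence the arguments of $\nabla^2\log p_t$ --- to a compact set on which the Hessian is bounded, and only then run Gr\"onwall. Given such a Hessian bound, integrability of $L$ is immediate, since (using $\sigma(t)^2+1 = 1/\alpha(t)$) the schedule-dependent factors in $\partial_{\bar\x}F$ combine, after changing variables back to $t$, into the quantity $\lvert\alpha'(t)\rvert/(2\alpha(t))$, which is bounded on the closed interval $[0,T]$. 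Finally, existence and uniqueness of the two solutions on $[0,T]$ --- needed for $u_s$ to be well defined in the first place --- follow from the same local-Lipschitz property of $F$ together with the a priori bound.
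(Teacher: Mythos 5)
Your proposal is correct and follows essentially the same route as the paper: both arguments bound the separation of the two trajectories via the uniform Lipschitz control on the drift (which the paper takes as a standing assumption on $h(t,\y)$, exactly the Hessian-type bound you identify as the key hypothesis) and then apply Gr\"onwall's inequality. The only cosmetic differences are that you work with the integral form in the variable $s=\sigma(t)$ while the paper differentiates $\lVert \y_t\rVert_2$ directly in $t$, and that you track the $\alpha(T)^{-1/2}$ factor from the change to barred coordinates explicitly, which the paper absorbs into $C(T)$.
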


\section{Sampling with Control}\label{sec: sampling with control}
Now we discuss our controllable sampling algorithm. We preserve the notation $\x_0$ to denote a ``target image'' or ``target mean'' (\textit{e.g.}, the top right corner in Figure \ref{fig:CCS_FFHQ}). We also preserve the notation $\x_T := \text{DDIM}^{-1}(\x_0;0, T)$, the ``noise'' by running DDIM \eqref{eqn: DDIM} reversely from time $0$ to $T$. Our objective is to perturb $\x_T$ into a \textit{random} $\x_T'$ such that the generated image $\x_0'$ such that it has \textit{1. a sample mean close to} $\x_0$ while maintaining \textit{2. sufficient diversity and difference from the original image} and \textit{3. high image quality}. 
The closeness is quantified by $L$-2 norm distance $\rVert\bE[\x_0'] - \x_0\rVert_2$, and the diversity is measured by $\mathbb{E}[||\x_0' - \x_0||_2^2]$. A notable feature of our algorithm is that users can specify a desired level of diversity ($C_0$ in Fig.~\ref{fig:CCS_FFHQ}, \ref{fig:qualitativelinear}), and the generated images will match this level while ensuring \(\bE[\x_0'] \approx \x_0\). Our  mechanism is defined as $\x_T' = a \x_T + b\Delta$, where $\Delta$ is a random perturbation, and $a$ and $b$ are parameters to be specified shortly.

\subsection{Sampling around a Center}\label{subsec: sampling around center}
For an input of the form \(\x'_T = a\x_T + b\Delta\) with random $\Delta$, when \(b\) is small and \(a\) is close to 1, it can be regarded as a slight perturbation of \(\x_T\). Based on  Section \ref{sec: linear change perturbation}, the output will remain close to \(\x_0\) with an additional linear adjustment applied to \(b\n\). Thus, we define \(\hat{\x}_0' := \x_0 + bA\Delta\) as an approximation for \(\x_0'\), where \(A = \gamma_0(a\x_T + b\Delta)\) specified in Proposition \ref{prop: linearity, DDIM}. Since $\Delta$ is the only source of randomness in $\hat{\x}_0'$, we can easily calculate  $\bE[\hat \x_0']=\x_0 + bAE[\Delta]$ and $\Var[\hat \x_0']= b^2A\Cov(\Delta)A^\top$. We will now discuss the principles for our sampling design. 

 \textbf{High-quality image generation:} we first note that the input to both DDPM and DDIM samplers is standard Gaussian noise. The following feature is known as the ``concentration phenomenon'' of a high-dimensional Gaussian:
\begin{prop}\label{prop: gaussian concentration}
    Let $X\sim \bN(0, I_d)$, then for any $\delta \in (0,1)$
    \begin{align*}
        \bP\left[ \lVert X\rVert_2^2 \in (1 \pm \delta) d\right\rvert] \geq 1-  2 \exp\left(-\frac12d\left(\frac12 \delta^2 - \frac13 \delta^3\right) \right).
    \end{align*}   
\end{prop}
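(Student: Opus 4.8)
The plan is to prove Proposition~\ref{prop: gaussian concentration} by the classical Chernoff (exponential Markov) method, applied separately to the upper and lower deviations of $\lVert X\rVert_2^2 = \sum_{i=1}^d X_i^2$, which is a $\chi^2_d$ random variable, and then to combine the two one-sided estimates by a union bound. No structural cleverness is needed; the only real content is choosing the optimal exponential tilt and bounding the resulting logarithmic terms.

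First I would recall that for a standard Gaussian coordinate $\bE[e^{s X_i^2}] = (1-2s)^{-1/2}$ when $s < 1/2$, hence $\bE[e^{s\lVert X\rVert_2^2}] = (1-2s)^{-d/2}$ by independence. For the upper tail, applying Markov's inequality to $e^{s\lVert X\rVert_2^2}$ with $s \in (0,1/2)$ gives $\bP(\lVert X\rVert_2^2 \geq (1+\delta)d) \leq (1-2s)^{-d/2}e^{-s(1+\delta)d}$; minimizing the exponent $-s(1+\delta) - \tfrac12\log(1-2s)$ over $s$ (the stationary point is $s = \delta/(2(1+\delta))$, which indeed lies in $(0,1/2)$) yields $\bP(\lVert X\rVert_2^2 \geq (1+\delta)d) \leq \exp\!\left(\tfrac{d}{2}\bigl(\log(1+\delta)-\delta\bigr)\right)$. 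Symmetrically, applying Markov to $e^{-s\lVert X\rVert_2^2}$ with $s>0$ and optimizing the exponent gives $\bP(\lVert X\rVert_2^2 \leq (1-\delta)d) \leq \exp\!\left(\tfrac{d}{2}\bigl(\delta + \log(1-\delta)\bigr)\right)$.

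Next I would invoke two elementary logarithm bounds valid for $\delta \in (0,1)$: truncating the alternating Taylor series gives $\log(1+\delta) \leq \delta - \tfrac{\delta^2}{2} + \tfrac{\delta^3}{3}$, and truncating the all-negative-terms series gives $\log(1-\delta) \leq -\delta - \tfrac{\delta^2}{2} - \tfrac{\delta^3}{3}$. Substituting these, the upper-tail exponent is at most $-\tfrac{d}{2}\bigl(\tfrac12\delta^2 - \tfrac13\delta^3\bigr)$ and the lower-tail exponent is at most $-\tfrac{d}{2}\bigl(\tfrac12\delta^2 + \tfrac13\delta^3\bigr)$, which is even smaller. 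Hence each one-sided probability is bounded by $\exp\!\left(-\tfrac12 d\bigl(\tfrac12\delta^2 - \tfrac13\delta^3\bigr)\right)$, and a union bound over the events $\{\lVert X\rVert_2^2 > (1+\delta)d\}$ and $\{\lVert X\rVert_2^2 < (1-\delta)d\}$ yields $\bP[\lVert X\rVert_2^2 \in (1\pm\delta)d] \geq 1 - 2\exp\!\left(-\tfrac12 d\bigl(\tfrac12\delta^2 - \tfrac13\delta^3\bigr)\right)$, as claimed.

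I do not anticipate a genuine obstacle: this is a textbook Laurent--Massart / Bernstein-type concentration estimate. The only points requiring mild care are (i) verifying that the Chernoff stationary point stays in the admissible range $s\in(0,1/2)$, and (ii) orienting the two logarithm inequalities correctly so that a $-\tfrac13\delta^3$ term (rather than $+\tfrac13\delta^3$) appears, which is exactly what allows a single expression to dominate both tails simultaneously.
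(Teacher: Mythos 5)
Your proof is correct: the Chernoff optimization is carried out properly (the tilting points $s=\delta/(2(1+\delta))$ and $s=\delta/(2(1-\delta))$ are the right ones and lie in the admissible ranges), the two logarithm inequalities $\log(1+\delta)\le \delta-\tfrac12\delta^2+\tfrac13\delta^3$ and $\log(1-\delta)\le -\delta-\tfrac12\delta^2-\tfrac13\delta^3$ are valid on $(0,1)$ and oriented correctly so that the upper-tail bound dominates, and the union bound gives exactly the stated constant. Note that the paper itself offers no proof of this proposition --- it is invoked as a known concentration phenomenon (of Laurent--Massart type) --- so your argument is a complete, self-contained justification of the standard kind; there is nothing to compare it against and no gap to report.
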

This result suggests that a standard Gaussian noise vector remains close to a hypersphere of radius \(\sqrt{d}\). For example, when \(d=50{,}000\) (a common dimension in imaging) and \(\delta=0.025\), Proposition~\ref{prop: gaussian concentration} guarantees with over 99.9\% probability that the squared norm is in the interval \(\bigl(0.975\,d,\;1.025\,d\bigr)\). Empirical results \cite{song2020denoising} further confirm that starting with a noise vector on this hypersphere is important for generating high-quality images.

Hence, we can expect $\lVert\x_T\rVert_2\approx \sqrt{d}$. Furthermore, we will design our mechanism to ensure \(\x'_T\) also has a norm $\approx \sqrt d$. 

\textbf{Close to target mean:} Our approximation $\hat\x_0'$ has expectation $\bE[\hat\x_0']  = \x_0 + bAE[\Delta]$. Thus, it is sufficient to select \(\Delta\) such that \(\mathbb{E}[\Delta] = 0\) in order to achieve:
$
\mathbb{E}[\hat{\x}_0] \approx \mathbb{E}[\hat{\x}_0'] = \x_0,
$
where the first approximation is justified by Proposition \ref{prop: linearity, DDIM} and \ref{prop: linearity ODE}, and is empirically validated in Figure \ref{fig:linear1}.

\subsection{Centering Feasibility}
The simplest strategy is to add a random noise vector $\Delta\x$ directly to \(\x_T\), expressed as \(\x_T' = \x_T + \Delta\x\) (with \(a = 1, b\Delta = \Delta\x\)). However, the following proposition demonstrates that this approach cannot produce high-quality images.

\begin{prop}\label{prop: centering feasibility}
  For any fixed vector $\x$, and any random vector $\Delta\x$ such that $\bE[\Delta\x] = 0$, the following holds:  
\[
\bE[\lVert \x + \Delta\x \rVert_2^2] =  \lVert \x \rVert^2_2 + \tr(\Cov[\Delta\x]) \geq \lVert \x \rVert_2^2,
\]
with equality if and only if $\Delta\x = 0$ almost surely.
\end{prop}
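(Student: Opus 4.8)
The plan is to expand the squared norm directly using the bilinear structure of the inner product and then take expectations term by term. Writing $\lVert \x + \Delta\x\rVert_2^2 = \langle \x + \Delta\x, \x + \Delta\x\rangle = \lVert\x\rVert_2^2 + 2\langle \x, \Delta\x\rangle + \lVert\Delta\x\rVert_2^2$, I would apply $\bE[\cdot]$ and use linearity of expectation. Since $\x$ is a fixed (deterministic) vector, $\bE[2\langle \x, \Delta\x\rangle] = 2\langle \x, \bE[\Delta\x]\rangle = 0$ by the hypothesis $\bE[\Delta\x] = 0$. This leaves $\bE[\lVert\x + \Delta\x\rVert_2^2] = \lVert\x\rVert_2^2 + \bE[\lVert\Delta\x\rVert_2^2]$.

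Next I would identify $\bE[\lVert\Delta\x\rVert_2^2]$ with $\tr(\Cov[\Delta\x])$. Because $\Cov[\Delta\x] = \bE[\Delta\x\,\Delta\x^\top] - \bE[\Delta\x]\bE[\Delta\x]^\top = \bE[\Delta\x\,\Delta\x^\top]$ (the second term vanishes since $\bE[\Delta\x]=0$), taking the trace and using its linearity and cyclic property gives $\tr(\Cov[\Delta\x]) = \bE[\tr(\Delta\x\,\Delta\x^\top)] = \bE[\tr(\Delta\x^\top\Delta\x)] = \bE[\lVert\Delta\x\rVert_2^2]$. This establishes the claimed identity $\bE[\lVert \x + \Delta\x\rVert_2^2] = \lVert\x\rVert_2^2 + \tr(\Cov[\Delta\x])$.

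For the inequality, I would note that $\Cov[\Delta\x]$ is positive semidefinite, so its trace — the sum of its (nonnegative) eigenvalues — is nonnegative; equivalently, $\bE[\lVert\Delta\x\rVert_2^2]\geq 0$ trivially since it is an expectation of a nonnegative random variable. Hence $\bE[\lVert\x+\Delta\x\rVert_2^2]\geq\lVert\x\rVert_2^2$. For the equality case, $\bE[\lVert\Delta\x\rVert_2^2]=0$ forces $\lVert\Delta\x\rVert_2^2 = 0$ almost surely (a nonnegative random variable with zero mean vanishes a.s.), i.e. $\Delta\x = 0$ almost surely; conversely $\Delta\x=0$ a.s. clearly gives equality.

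There is essentially no obstacle here — the statement is a direct consequence of the bias–variance style decomposition of the second moment. The only point requiring the tiniest care is justifying that the cross term has zero expectation and that the trace/variance identity is legitimate, both of which hinge only on $\bE[\Delta\x]=0$ and finiteness of the second moment (implicitly assumed so that $\Cov[\Delta\x]$ exists). If one wished to be fully rigorous about integrability, I would remark that if $\bE[\lVert\Delta\x\rVert_2^2]=\infty$ the identity still holds with both sides infinite and the inequality is vacuous, so no generality is lost.
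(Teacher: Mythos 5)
Your proof is correct and follows essentially the same route as the paper: the paper simply invokes the known second-moment decomposition $\bE[\lVert Z\rVert_2^2] = \lVert\bE[Z]\rVert_2^2 + \tr(\Cov[Z])$ applied to $Z = \x + \Delta\x$, which is exactly the identity you derive from scratch by expanding the square and killing the cross term. Your treatment of the equality case (a nonnegative random variable with zero mean vanishes a.s.) matches the paper's argument that $\tr(\Cov[\Delta\x])=0$ forces $\Delta\x$ to be deterministic, hence equal to its mean $0$.
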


Proposition \ref{prop: centering feasibility} indicates that directly adding noise, $\x_T \rightarrow \x_T' := \x_T + \Delta\x$, pushes $\x_T'$ farther from the spherical surface. This partly explains why the average image becomes blurrier or noisier as the scale of $\Delta\x$ increases, since the drift term $\tr(\Cov[\Delta\x])$ grows larger, causing $\x_T'$ to deviate further from the sphere with radius $\lVert \x_T \rVert_2$.

This inspires us to consider the spherical linear interpolation method \cite{shoemake1985animating} for sampling, as described below. Similar approaches have been proposed by ~\citep{noisediffusion, song2020denoising}, but only for interpolating between two images.

\subsection{Spherical Interpolation}

Let vectors \( \bfa \) and \( \bfb \) satisfy \( \lVert \bfa \rVert_2 = \lVert \bfb \rVert_2 \) and form an angle \( \theta \). Then for any $\alpha\in(0,1)$, the vector obtained through spherical interpolation
$\bfc := \frac{\sin(\alpha\theta)}{\sin\theta} \bfa + \frac{\sin((1-\alpha) \theta)}{\sin(\theta)} \bfb$
satisfies $\lVert \bfc\rVert_2 = \lVert\bfa\rVert_2 = \lVert\bfb\rVert_2$.

In our case, for a standard $d$-dimensional normal noise vector $\eps$, it is known $\lVert\eps\rVert_2\approx \sqrt{d}\approx \lVert\x_T\rVert_2$. Therefore, we can do spherical interpolation between $\x_T$ and $\eps$ to obtain $\x_T'$. Our CCS algorithm is described in Algorithm \ref{alg: CCS Full inverstion}.

The perturbation mechanism corresponds to $\x_T' = a\x_T +b\Delta$ with $a = \sin(\theta - C_0)/\sin(\theta), b = \sin(C_0)/\sin(\theta), \Delta = \epsilon \sim \bN(0,I)$. 
$C_0 := \alpha\theta$ is defined as the parameter of perturbation scale.
This mechanism satisfies the design principles described in Section \ref{subsec: sampling around center}:  \(\bE[\eps] = 0\) ensures that the new sample remains close to the target mean, while the Gaussian concentration and spherical interpolation ensure that \(\lVert\x_T' \rVert_2 \approx \lVert \x_T\rVert_2\), resulting in high-quality generated images.

\begin{algorithm}
\caption{(Full Inversion) CCS Sampling}\label{alg: CCS Full inverstion}

\textbf{Requires: } target mean $\x_0$, perturbation scale $C_0$, number of diffusion model timesteps $T$

\textbf{Step 0:} Compute the DDIM inversion of $\x_0$, i.e. $\x_T = \text{DDIM}^{-1}(\x_0, 0, T)$

\textbf{Step 1:} Sample noise $\epsilon \sim \bN(0,I)$. Then compute \[\theta = \cos^{-1}\left(\frac{\epsilon \cdot \x_T}{||\epsilon||_2||\x_T||_2}\right)\]

\textbf{Step 2:} Compute $\x_T$ using spherical interpolation formula:
\[
\x_T' = \frac{\sin(C_0)}{\sin(\theta)} \cdot \epsilon + \frac{\sin(\theta - C_0)}{\sin(\theta)} \cdot \x_T
\]

\textbf{Step 3:} Output sample $\x_0' = \text{DDIM}(\x_T', T, 0)$
\end{algorithm}
Parameter \(C_0\) controls sampling diversity. In the extreme case  \(C_0 = 0\), we have \(\x_T' = \x_T\), so \(\x_0'\) matches \(\x_0\) exactly but has no diversity. A larger \(C_0\) makes the perturbed input deviate more from the original image and gets closer to  noise. This leads to greater diversity in the generated image.

Algorithm \ref{alg: CT} allows users to control the desired level of diversity. It works by calling Algorithm \ref{alg: CCS Full inverstion} for different values of \(C_0\), which are determined through binary search. Let The process is repeated until the desired diversity level (up to a small tolerance threshold) is reached: if the MSE of generated images to target mean is below target threshold, \(C_0\) is increased; otherwise, it is decreased.
% if the generated image lacks sufficient diversity, the value of \(C_0\) is increased; otherwise, it is decreased.
\begin{algorithm}
\caption{Controller Tuning (CT)}\label{alg: CT}
\begin{algorithmic}[1]
\STATE \textbf{Input:} target mean $\x_0$, target diversity level $\MSE_{\text{target}}$, tolerance: $\tol$ 
\STATE \textbf{Initialize:} $C_{\text{low}} \gets 0$, $C_{\text{high}} \gets \pi/2$, $C_0 \gets \frac{C_{\text{low}} + C_{\text{high}}}{2}$

\WHILE{not converged}
    \STATE Sample a batch of  $\x_0'$ by Alg.~\ref{alg: CCS Full inverstion}  with $C_0$ and $\x_0$
    \IF{$ |\mathbb{E}[||\x_0' - \x_0||_2] - \MSE_{\text{target}} | < \tol$}
        \STATE \textbf{Break}
    \ELSIF{$\mathbb{E}[||\x_0' - \x_0||_2] > \MSE_{\text{target}}$}
        \STATE $(C_{\text{high}},C_0) \gets (C_0, \frac{C_0 + C_{\text{low}}}{2})$
    \ELSE
        \STATE $(C_{\text{low}}, C_0) \gets (C_0, \frac{C_0 + C_{\text{high}}}{2})$
    \ENDIF
\ENDWHILE
\end{algorithmic}
\end{algorithm}

The following theorem demonstrates that the CCS algorithm is able to precisely control the input distance.

\begin{prop}\label{prop: CCS control initial distance}
    Denote the dimensionality of $\x_T$ by $d$. Given an initial noise $\x_T$ with $\lVert \x_T\rVert_2 = (1+o(1))\sqrt{d}$, and fix a small $\delta > 0$. For any $M\leq (2-\delta)\sqrt{d}$, then we can find $C_0$ in Algorithm \ref{alg: CCS Full inverstion} such that with probability $p_d \rightarrow 1$ as $d\rightarrow\infty$, we have $\lVert \x_T' - \x_T\rVert_2 = M$.
\end{prop}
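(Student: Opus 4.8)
The plan is to freeze the Gaussian draw $\eps$ (hence the angle $\theta$ between $\eps$ and $\x_T$) and view the output of Algorithm~\ref{alg: CCS Full inverstion} purely as a function of the scalar $C_0$, then hit the target $M$ by an intermediate-value argument. Write
\[
\x_T'(C_0) = \frac{\sin C_0}{\sin\theta}\,\eps + \frac{\sin(\theta - C_0)}{\sin\theta}\,\x_T,
\qquad
g(C_0) := \lVert \x_T'(C_0) - \x_T \rVert_2 .
\]
First I would observe that $\eps$ is almost surely not collinear with $\x_T$, so $\sin\theta>0$ and $C_0\mapsto g(C_0)$ is continuous on all of $[0,\pi]$. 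Then I would evaluate the endpoints: $\x_T'(0)=\x_T$ gives $g(0)=0$, while $\sin(\theta-\pi)=-\sin\theta$ gives $\x_T'(\pi)=-\x_T$ and hence $g(\pi)=2\lVert\x_T\rVert_2$. Using the hypothesis $\lVert\x_T\rVert_2=(1+o(1))\sqrt d$, for all sufficiently large $d$ we have $g(\pi)=(2+o(1))\sqrt d>(2-\delta)\sqrt d\ge M$, while $g(0)=0\le M$; hence the intermediate value theorem yields some $C_0\in[0,\pi]$ with $g(C_0)=M$. This argument uses only the probability-one event $\{\eps\not\parallel\x_T\}$, so it holds a fortiori with probability $p_d\to1$.

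To make the statement quantitative — to pin down which $C_0$ to use and to certify that the binary search of Algorithm~\ref{alg: CT} is well posed — I would expand $g$ exactly. With $p:=\sin C_0/\sin\theta$ and $q:=\sin(\theta-C_0)/\sin\theta-1$,
\[
g(C_0)^2 = p^2\lVert\eps\rVert_2^2 + q^2\lVert\x_T\rVert_2^2 + 2pq\,\langle\eps,\x_T\rangle .
\]
Then I would invoke two standard high-dimensional facts, each holding with probability tending to one, and combine them by a union bound: (i) the Gaussian concentration of Proposition~\ref{prop: gaussian concentration}, giving $\lVert\eps\rVert_2^2=(1+o(1))d$; and (ii) near-orthogonality, since $\langle\eps,\x_T\rangle=\lVert\x_T\rVert_2\cdot\langle\eps,\x_T/\lVert\x_T\rVert_2\rangle$ with $\langle\eps,\x_T/\lVert\x_T\rVert_2\rangle\sim\bN(0,1)$, whence $\lvert\langle\eps,\x_T\rangle\rvert=O(\sqrt d)$, $\cos\theta=O(d^{-1/2})$ and $\sin\theta=1-o(1)$. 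On this event the cross term $2pq\langle\eps,\x_T\rangle=O(\sqrt d)$ is lower order than the diagonal part $(p^2+q^2)(1+o(1))d$, uniformly over $C_0\in[0,\pi]$ because $p$ and $q$ are bounded there, and $p^2+q^2\to\sin^2 C_0+(\cos C_0-1)^2=4\sin^2(C_0/2)$. Hence $g(C_0)=2\sin(C_0/2)\sqrt d\,(1+o(1))$ uniformly, the achievable distances asymptotically fill $[0,2\sqrt d)$, $g$ is approximately increasing, and the desired scale is $C_0\approx 2\arcsin\!\bigl(M/(2\sqrt d)\bigr)$, which is well defined and bounded away from $\pi$ exactly because $M\le(2-\delta)\sqrt d$.

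The main obstacle is the uniform error control in the second part: one must verify that the cross term, the fluctuation $\lVert\eps\rVert_2^2-d$, and the substitution $\sin\theta\to1$ each contribute only $o(1)$ to $g(C_0)^2/d=4\sin^2(C_0/2)+o(1)$ simultaneously for every admissible $C_0$, and that all the exceptional events fit inside a single event of probability $p_d\to1$. For the bare existence claim $g(C_0)=M$, none of this is required — the continuity-plus-intermediate-value argument of the first paragraph, anchored at $g(0)=0$ and $g(\pi)=2\lVert\x_T\rVert_2$, suffices, and its only probabilistic input is that $\eps$ is not collinear with $\x_T$.
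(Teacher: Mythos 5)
Your proposal is correct and follows essentially the same route as the paper's proof: both rest on Gaussian norm concentration plus the near-orthogonality $\langle\eps,\x_T\rangle = o(d)$ to get $\lVert\x_T'-\x_T\rVert_2^2 = d(2-2\cos C_0)+o(d)$ uniformly, and your explicit choice $C_0=2\arcsin\bigl(M/(2\sqrt d)\bigr)$ is algebraically identical to the paper's $\cos^{-1}\bigl(1-M^2/(2\lVert\x_T\rVert_2^2)\bigr)$. The one genuine refinement is your intermediate-value argument anchored at the exact endpoint values $g(0)=0$ and $g(\pi)=2\lVert\x_T\rVert_2$, which yields the exact-equality existence claim with essentially no probabilistic input and more cleanly than the paper's appeal to ``smooth interpolation'' between $\x_T$ and $\eps$ (whose natural range $C_0\in(0,\theta)$ with $\theta\approx\pi/2$ only reaches distances up to about $\sqrt{2d}$, short of $(2-\delta)\sqrt d$, so extending $C_0$ toward $\pi$ as you do is in fact necessary).
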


Since the dimensionality of our problem is sufficiently large, Proposition \ref{prop: CCS control initial distance} allows users to control \(M\) as the input distance. Consequently, Algorithm \ref{alg: CCS Full inverstion} can generate a random interpolants with an exact distance of \(M\) from the input. Furthermore, since the direction is uniformly distributed, and when $C_0$ is small, $\mathbb{E}[\x_T'] \approx \mathbb{E}[\x_T]$, and $\mathbb{E}[\x_0'] \approx \mathbb{E}[\x_0]$, which satisfies our design goal.
\vspace{-10px}
\subsection{Extension to Conditional Latent Diffusion Models}
Conditional diffusion models usually compute the conditional score with classifier-free guidance (CFG). Let $s_{\theta}(\x_t, t)$ be the predicted noise, it can be written in $s_{\theta}(\x_t, t) = s_{\theta}(\x_t, t, c_{null}) + \gamma (s_{\theta}(\x_t, t, c) - s_{\theta}(\x_t, t, c_{null}))$ where $\gamma$ is the CFG term, $c$ is the condition and $c_{null}$ is the null condition. Exact inversion is very challenging in a high CFG setting~\cite{exactinversion}, and reconstruction error in the autoencoder of latent diffusion models makes it even harder. Motivated by this, we propose a Partial-Inversion CCS Sampling algorithm (P-CCS). Instead of starting from the $T$, we pick an intermediate timestep $t_0$. Then, we compute the noise term from DDIM inversion by subtracting the clean component, sample a new noise from $\bN(0, (1 - \alpha_{t_0})I)$, and then perform spherical interpolation. Details of this Alg.~\ref{alg: CCS partial inverstion} can be found in the Appendix. Furthermore, we can sample around a \textit{edited} target mean by first performing DDIM inversion with source prompt, then apply P-CCS sampling, and finally run DDIM with target prompt. More details can be found in the experiments and the Appendix. 
\vspace{-10px}
\section{Experiments}

In the experiments, we aim to answer three questions: 1. Can we sample images that have a sample mean close to the target mean with a target MSE by our designed algorithms while maintaining good image quality? 2. Does the linearity phenomenon between the norm of residual images and the perturbation scale widely exist? 3. Can our proposed algorithm work in more challenging settings such as in conditional generation with CFG or image editing tasks?  
\vspace{-10px}
\subsection{Validation of Linearity Phenomenon}
\textbf{Experimental setting.}
We perform extensive experiments on both pixel diffusion models on the FFHQ and CIFAR-10 dataset and latent diffusion models on the Celeba-HQ and fMoW dataset. For each experiment, we first sample 50 images as target images from each validation dataset from FFHQ \cite{ffhq}, CIFAR-10\cite{cifar-10}, and Celeba-HQ \cite{celeba-hq}. We also pick one images each class from the validation set of the fMoW dataset \cite{fmow} for further verification. Then for the FFHQ and CIFAR-10 selected data, we use pixel diffusion models as backbone; for Celeba-HQ and fMoW we use stable diffusion 1.5 as the backbone. The prompt for Celeba-HQ is given by "A high quality photo of a face" and the prompt for fMoW is given by "satellite images". Then, we use each image as a target mean and perform CCS sampling as in Alg.\ref{alg: CCS Full inverstion}.

For each target image, we sample eight $C_0$ from a uniform [0, 0.9] distribution. For each $C_0$, we sample 24 images. Then we compute the average $L^2$ distance between the sampled images and the target mean for each scale. 

\textbf{Evaluations.}
To quantitatively evaluate the linearity phenomenon, we compute the R-square between the input perturbation scales and the normalized average residual norms (scale between 0-1) for 4 datasets with both pixel diffusion models and latent diffusion models. Note that since different target means can lead to different slopes by different Hessian matrices, we normalize the residual norms. Specifically, we compute empirical slope $a$ and bias $b$ between $x = \sin(C_0)$ and $y = \bE[\lVert \x_0' - \x_0\rVert_2]$ each target mean, and then normalize the average $L^2$ distance to be: $y' = \frac{y - b}{a}$.

\textbf{Results.}
We observe a very strong linearity in the above experiments. Especially for pixel diffusion models, the R-square exceeds 0.98 for both datasets, which indicates almost a perfect linear relationship. For latent diffusion models, the linearity is slightly weaker, but still above 0.94 in R-square for both datasets. This is expected since Stable Diffusion use a nonlinear autoencoder 
 and trained on a different dataset. We also present more quantitative results in Fig.~\ref{fig:quant_linear} and qualitative results in Fig.~\ref{fig:qualitativelinear}. Surprisingly, we also observe a very linear semantic change in additional to pixel-value change.
\begin{table}[ht!]
\centering
\begin{tabular}{|c|c|c|c|}
\hline
\multicolumn{2}{|c|}{Pixel Diffusion Models} & \multicolumn{2}{c|}{Latent Diffusion Models} \\ \hline
FFHQ & CIFAR-10 & CelebA-HQ  & fMoW \\ \hline
0.995 & 0.988 & 0.959 & 0.947 \\ \hline
\end{tabular}
\caption{R-square between scales of input perturbation and normalized residual norms}
\label{tab:diffusion_comparison}
\end{table}
\begin{figure}
    \centering
    \includegraphics[width=1.0\linewidth]{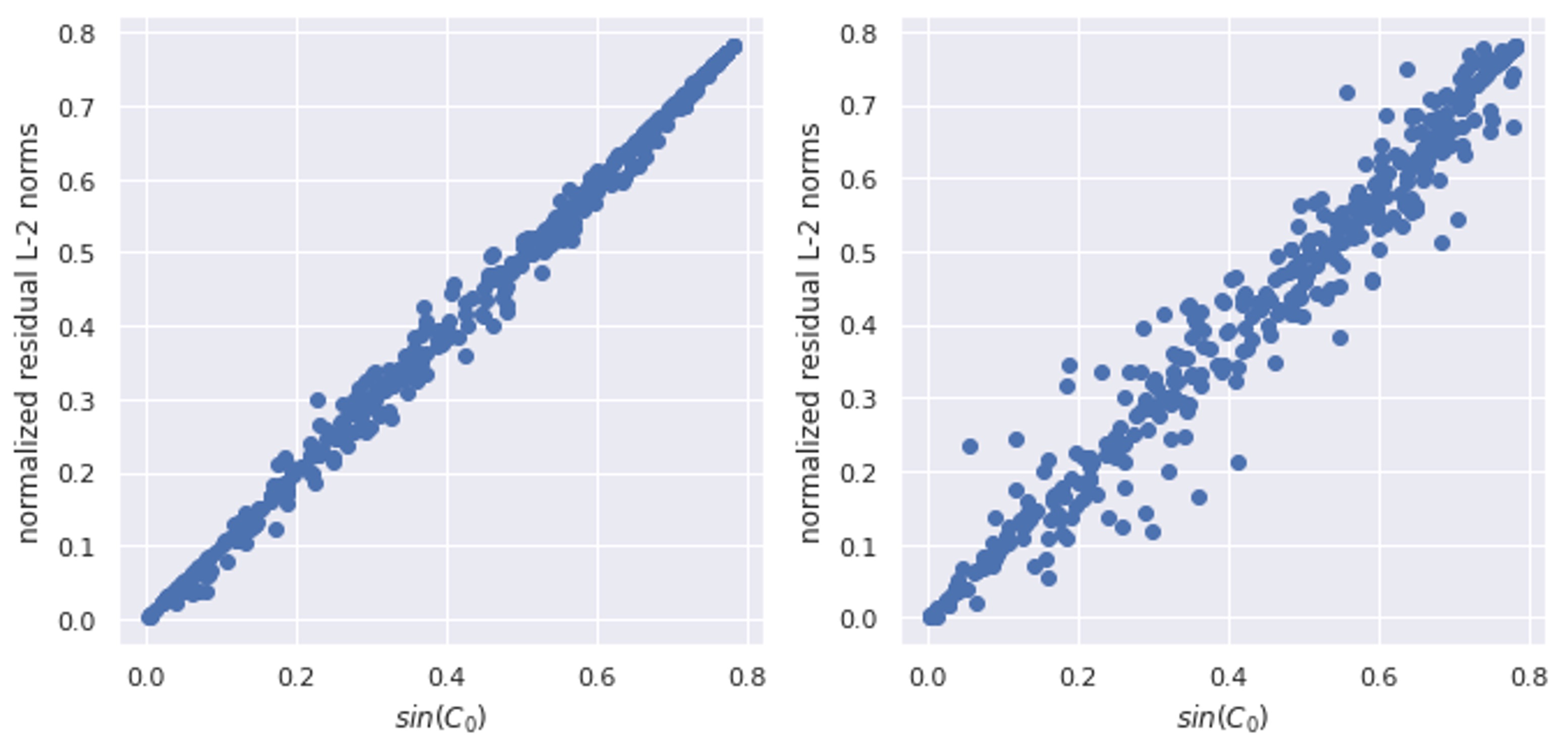}
    \caption{Quantitative demonstration of linearity when increasing scale of perturbation. With increased $\sin(C_0)$, the magnitude of perturbation increases, and the average $L^2$ distance between samples and the target image increases linearly. Left is the linearity on FFHQ dataset using pixel diffusion; Right is the linearity on Celeba-HQ dataset using Stable Diffusion 1.5.}
    \label{fig:quant_linear}
\end{figure}
\vspace{-10px}
\subsection{Controllable Sampling}

\begin{figure}
    \centering
    \includegraphics[width=1\linewidth]{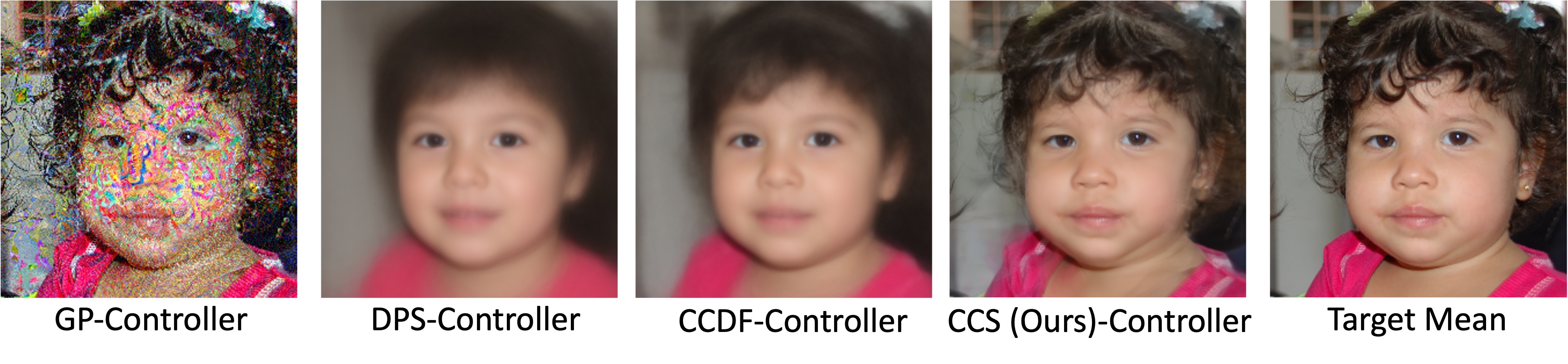}
    \vspace{-25px}
    \caption{We sample 120 images with a fixed target mean using different methods and analyze their sample mean (average pixel intensity). Our observations show that the sample mean of our method closely matches that of the original image.}
    \vspace{-5px}
    \label{fig:ccs_mean}
\end{figure}

\begin{figure}
    \centering
    \includegraphics[width=0.6\linewidth]{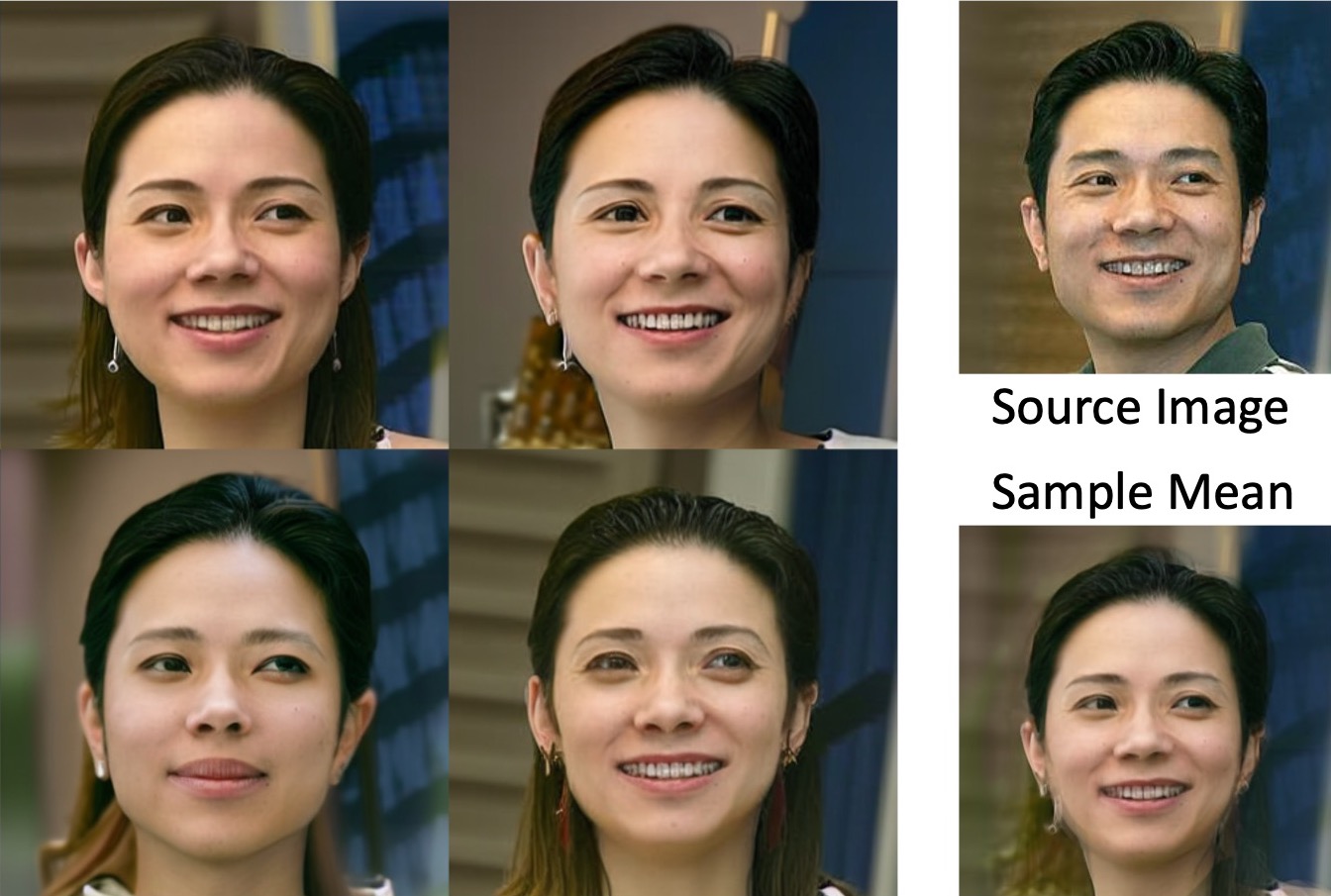}
    \caption{Example Samples of P-CCS around an Edited Mean.}
    \label{fig:edited image}
    \vspace{-15px}
\end{figure}

\textbf{Experimental setting.} For pixel diffusion models, we use the first 50 images from the validation data from the FFHQ-256 \cite{dps} dataset. Then we set each image as the target mean and then sample 120 images (6000 images in total) with each target mean with a target rMSE (square root of average L-2 norm of the residuals between the sample and target mean) of 0.12.  Then we test on the CIFAR-10 dataset. We randomly sample 20 images serving as target means, and then sample 120 images for each target mean with a target rMSE level of 0.11.

For Stable Diffusion, we use the SD1.5 checkpoint \cite{ldm}. We study a more challenging scenario (degraded low-resolution input images with conditional text-guided latent diffusion model). We sample 50 images from the validation set from Celeba-HQ dataset with resolution $256 \times 256$, and then use bicubic upsampling to upscale it to $512 \times 512$. Note that SD1.5 is not trained on the Celeba-HQ dataset so this demonstrates the generalization capability of algorithms. We use the same prompt and CFG level in the linearity control experiments. 
% \end{itemize}

\textbf{Implementation details.} 
We follow Alg.~\ref{alg: CCS Full inverstion} in implementing our methods for pixel diffusion models, and Alg.~\ref{alg: CCS partial inverstion} for latent diffusion models. We take the pretrained models for FFHQ and CIFAR-10 from the improved/guided diffusion repos \cite{improvediffusion, beatGAN} for the pixel diffusion experiments, and the Stable Diffusion 1.5 \cite{ldm} for latent diffusion experiments. For LDMs, we set $t_0 = 45$, where $T = 50$ due to DDIM inversion performing worse with classifier-free guidance than unconditional models. We set the rMSE target to be 0.12, 0.11 for FFHQ and CIFAR-10 respectively, and 0.07 for Stable Diffusion experiments to test diverse control targets. The tolerance is set to be 0.01 in all cases. More details in the Appendix.\\

\textbf{Baselines.}
Since we target task is quite novel task without existing method may serve as exact baseline to compare with, we have to adapt some previous methods with our proposed controller algorithm as an add-on to this new setting.
\begin{itemize}[noitemsep, topsep=0pt]
    \item Gaussian Perturbation with Controller (GP-C): We add a Gaussian perturbation to the initial noisy image $x_{t0}$, where the perturbation scale is determined by our controller. This method resembles works that perform local editing \cite{loco}.
    \item (Latent) Diffusion Posterior Sampling \cite{dps, resample} with controller (DPS-C): We perform posterior sampling with $x_0$ as the measurement. The scale of the gradient term in (L)DPS can control the randomness, so we design a controller based on this. Details in the Appendix.
    \item ILVR with controller (ILVR-C): the ILVR algorithm \cite{ilvr} is for sampling high quality images based on a reference image. The larger the downsampling parameter gives a better diversity, we dynamically adjust that parameter as by our controller algorithm. Since it is designed only for DDPM, we do not experiment it with LDMs. Details in the Appendix.
    \item Come-closer-diffuse-faster with controller (CCDF-C): CCDF use DDPM forward to find a starting noise at $t_0$, and then perform reverse sampling based on that noise \cite{ccdf}. We adjust $t_0$ based on our controller algorithm.
\end{itemize}

\textbf{Evaluation metrics.}
We first compute pixel-wise metrics to validate our hypothesis that sample mean is close to the target mean.
\begin{itemize}[noitemsep, topsep=0pt]
\setlength{\itemsep}{0pt}
\item PSNR (Peak Signal-to-Noise Ratio): quantifies the pixel-wise difference between the target mean and the sample mean.
\item SD: the average of standard deviations of pixel intensities for each sampled image, which is used to measure the diversity of images.
\end{itemize}
Then we compute perceptual and reference-free metrics to measure the sample quality:
\begin{itemize}[noitemsep, topsep=0pt] 
    \item MUSIQ \cite{musiq}: measures the perceptual image quality, which focuses on low-level perceptual quality and is sensitive to blurs/noise/other distortions
    \item CLIP-IQA \cite{clipiqa}: measures the semantic image quality, which is more higher-level than MUSIQ
    \item Inception Score (IS) \cite{IS}: is used in the CIFAR-10 dataset to further measure image quality and diversity. Since CIFAR-10 has a low resolution and images are blurry, we report IS score instead of MUSIQ and CLIP-IQA for CIFAR-10.
\end{itemize}

\textbf{Results.} Quantitative results are demonstrated in Table.~\ref{tab:method_comparison1}, \ref{tab:method_comparison2}, \ref{tab:method_comparison3}. We observe that our CCS sampling method significantly outperforms all other methods in centering at a target mean when fixing the MSE level, while surprisingly maintaining superior image perceptual quality and diversity. Posterior sampling methods suffer from image quality degradation and diversity decreases. Qualitatively, we observe that the sample means of other methods look blurry or noisy, as demonstrated in Fig.~\ref{fig:ccs_mean}. Note that although GP satisfies $\mathbb{E}[\Delta x] = 0$, it pushes $\x_T'$ further from the spherical surface, where the diffusion model is not trained on. Empirically we observe very noisy samples and sample mean. 

\begin{table}[h!]
\centering
\scalebox{0.85}{
\begin{tabular}{|l|c|c|c|c |}
\hline 
 \textbf{Methods}  &  PSNR$\uparrow$&  SD$ \uparrow$&  CLIP-IQA$\uparrow$&  MUSIQ$\uparrow$\\ \hline
 GP-C      &  18.88                   &  0.028                       &  0.701       &  45.88     \\ \hline

 ILVR-C     & 20.04 & 0.070 & \underline{0.746} & 62.45\\

 DPS-C         &  21.02                   &  0.069                       &  0.738    &  64.60        \\ \hline
 CCDF-C               &  \underline{23.52}                   &  \underline{0.088}                       &  \underline{0.746}            &  \underline{66.15} \\ \hline
 CCS (Ours)-C        &  \textbf{25.13}                   &  \textbf{0.104}                       &  \textbf{0.750}        &  \textbf{66.79}    \\ \hline
\end{tabular}
}
\caption{Results of Pixel Diffusion models on the FFHQ Dataset with rMSE target 0.12.}
\label{tab:method_comparison1}
\end{table}

\begin{table}[h!]
\centering
\scalebox{0.85}{
\begin{tabular}{|l|c|c|c|}
\hline
\textbf{Methods}           &  PSNR$\uparrow$ &  SD$\uparrow$ &  IS$\uparrow$ \\ \hline
GP-C      & 24.66                   & 0.100                       & 7.56       \\ \hline
DPS-C         & 23.13                   & 0.054                       & 7.86       \\ \hline
CCDF-C               & 24.63                   & 0.099                       & 7.91             \\ \hline
CCS (Ours)-C        & \textbf{26.05}                   & \textbf{0.107}                       & \textbf{8.09}     \\ \hline
\end{tabular}}
\caption{Results of Pixel Diffusion models on the CIFAR-10 Dataset with rMSE target 0.11.}
\label{tab:method_comparison2}
\end{table}

\begin{table}[h!]
\centering
\scalebox{0.85}{
\begin{tabular}{|l|c|c|c|c |}
\hline
 \textbf{Methods}           &  PSNR$\uparrow$ &  SD$\uparrow$ &  CLIP-IQA$\uparrow$ &  MUSIQ$\uparrow$ \\ \hline
GP-C      & 23.02                   & 0.045                       & 0.721       & 48.91     \\ \hline
LDPS-C         & 24.56                  & 0.034                       & 0.721    & 29.07        \\ \hline
CCDF-C               & 27.66                   & 0.051                       & \textbf{0.735}            & 49.29 \\ \hline
CCS (Ours)-C        & \textbf{30.29}                   & \textbf{0.053}                       & 0.732        & \textbf{49.66}    \\ \hline
\end{tabular}
}
\caption{Results of the Stable Diffusion 1.5 on the Celeba-HQ Dataset with rMSE target 0.07.}
\label{tab:method_comparison3}
\end{table}
\vspace{-10px}
\subsection{Controllable Sampling for Image Editing Task}

We perform additional experiments with the conditional sampling for image editing using our CCS algorithm. We apply Algo.~\ref{alg: CCS partial inverstion} using the source prompt at DDIM inversion and the target prompt at reverse sampling. We demonstrate that we can sample diverse edited images as demonstrated in Fig.~\ref{fig:edited image}, where the diversity can be controlled by $C_0$.

\vspace{-10px}

\section{Conclusion}

In this work, we study a new problem: how to sample images with target statistical properties. We present a novel sampling algorithm and a novel controller method for diffusion models to sample with desired statistical properties. We also unveil an interesting linear response to perturbation phenomenon both theoretically and empirically. Extensive experiments show that our proposed method samples the closest to the target mean when controlling the MSE compared to other methods, while maintaining superior image quality and diversity.

% \newpage
% \section*{Impact Statement}
% This paper presents work whose goal is to advance the field of Machine Learning. There are many potential societal consequences of our work, none which we feel must be specifically highlighted here.
% Nevertheless, our work has the potential to generate synthetic content, the impact of which has been thoroughly examined in~\cite{deepfakes}.
% % In the unusual situation where you want a paper to appear in the
% % references without citing it in the main text, use \nocite

\bibliography{example_paper}

\begin{thebibliography}{43}
\providecommand{\natexlab}[1]{#1}
\providecommand{\url}[1]{\texttt{#1}}
\expandafter\ifx\csname urlstyle\endcsname\relax
  \providecommand{\doi}[1]{doi: #1}\else
  \providecommand{\doi}{doi: \begingroup \urlstyle{rm}\Url}\fi

\bibitem[Chen et~al.()Chen, Zhang, Guo, Lu, Wang, and Qu]{loco}
Chen, S., Zhang, H., Guo, M., Lu, Y., Wang, P., and Qu, Q.
\newblock Exploring low-dimensional subspace in diffusion models for controllable image editing.
\newblock In \emph{The Thirty-eighth Annual Conference on Neural Information Processing Systems}.

\bibitem[Chen et~al.(2024{\natexlab{a}})Chen, Kontonis, and Shah]{chen2024learning}
Chen, S., Kontonis, V., and Shah, K.
\newblock Learning general gaussian mixtures with efficient score matching.
\newblock \emph{arXiv preprint arXiv:2404.18893}, 2024{\natexlab{a}}.

\bibitem[Chen et~al.(2024{\natexlab{b}})Chen, Zhang, Guo, Lu, Wang, and Qu]{chen2024exploring}
Chen, S., Zhang, H., Guo, M., Lu, Y., Wang, P., and Qu, Q.
\newblock Exploring low-dimensional subspace in diffusion models for controllable image editing.
\newblock In \emph{The Thirty-eighth Annual Conference on Neural Information Processing Systems}, 2024{\natexlab{b}}.
\newblock URL \url{https://openreview.net/forum?id=50aOEfb2km}.

\bibitem[Choi et~al.(2021)Choi, Kim, Jeong, Gwon, and Yoon]{ilvr}
Choi, J., Kim, S., Jeong, Y., Gwon, Y., and Yoon, S.
\newblock Ilvr: Conditioning method for denoising diffusion probabilistic models.
\newblock In \emph{Proceedings of the IEEE/CVF International Conference on Computer Vision}, pp.\  14367--14376, 2021.

\bibitem[Christie et~al.(2018)Christie, Fendley, Wilson, and Mukherjee]{fmow}
Christie, G., Fendley, N., Wilson, J., and Mukherjee, R.
\newblock Functional map of the world.
\newblock In \emph{CVPR}, 2018.

\bibitem[Chung et~al.()Chung, Kim, Mccann, Klasky, and Ye]{dps}
Chung, H., Kim, J., Mccann, M.~T., Klasky, M.~L., and Ye, J.~C.
\newblock Diffusion posterior sampling for general noisy inverse problems.
\newblock In \emph{The Eleventh International Conference on Learning Representations}.

\bibitem[Chung et~al.(2022{\natexlab{a}})Chung, Sim, Ryu, and Ye]{mcg}
Chung, H., Sim, B., Ryu, D., and Ye, J.~C.
\newblock Improving diffusion models for inverse problems using manifold constraints.
\newblock \emph{Advances in Neural Information Processing Systems}, 35:\penalty0 25683--25696, 2022{\natexlab{a}}.

\bibitem[Chung et~al.(2022{\natexlab{b}})Chung, Sim, and Ye]{ccdf}
Chung, H., Sim, B., and Ye, J.~C.
\newblock Come-closer-diffuse-faster: Accelerating conditional diffusion models for inverse problems through stochastic contraction.
\newblock In \emph{Proceedings of the IEEE/CVF Conference on Computer Vision and Pattern Recognition}, pp.\  12413--12422, 2022{\natexlab{b}}.

\bibitem[Chung et~al.(2023)Chung, Ye, Milanfar, and Delbracio]{solve1}
Chung, H., Ye, J.~C., Milanfar, P., and Delbracio, M.
\newblock Prompt-tuning latent diffusion models for inverse problems.
\newblock \emph{arXiv preprint arXiv:2310.01110}, 2023.

\bibitem[Chung et~al.(2024)Chung, Hyun, and Heo]{styleid}
Chung, J., Hyun, S., and Heo, J.-P.
\newblock Style injection in diffusion: A training-free approach for adapting large-scale diffusion models for style transfer.
\newblock In \emph{Proceedings of the IEEE/CVF Conference on Computer Vision and Pattern Recognition}, pp.\  8795--8805, 2024.

\bibitem[De~Bortoli(2022)]{de2022convergence}
De~Bortoli, V.
\newblock Convergence of denoising diffusion models under the manifold hypothesis.
\newblock \emph{Transactions on Machine Learning Research}, 2022.

\bibitem[Dhariwal \& Nichol(2021{\natexlab{a}})Dhariwal and Nichol]{beatGAN}
Dhariwal, P. and Nichol, A.
\newblock Diffusion models beat gans on image synthesis.
\newblock \emph{Advances in neural information processing systems}, 34:\penalty0 8780--8794, 2021{\natexlab{a}}.

\bibitem[Dhariwal \& Nichol(2021{\natexlab{b}})Dhariwal and Nichol]{dhariwal2021diffusion}
Dhariwal, P. and Nichol, A.
\newblock Diffusion models beat {GAN}s on image synthesis.
\newblock \emph{Advances in neural information processing systems}, 34:\penalty0 8780--8794, 2021{\natexlab{b}}.

\bibitem[Dwork(2006)]{differentialprivacy}
Dwork, C.
\newblock Differential privacy.
\newblock In \emph{International colloquium on automata, languages, and programming}, pp.\  1--12. Springer, 2006.

\bibitem[Epstein et~al.(2023)Epstein, Jabri, Poole, Efros, and Holynski]{control2}
Epstein, D., Jabri, A., Poole, B., Efros, A., and Holynski, A.
\newblock Diffusion self-guidance for controllable image generation.
\newblock \emph{Advances in Neural Information Processing Systems}, 36:\penalty0 16222--16239, 2023.

\bibitem[Gatmiry et~al.(2024)Gatmiry, Kelner, and Lee]{gatmiry2024learning}
Gatmiry, K., Kelner, J., and Lee, H.
\newblock Learning mixtures of gaussians using diffusion models.
\newblock \emph{arXiv preprint arXiv:2404.18869}, 2024.

\bibitem[Guo et~al.(2024)Guo, Liu, Cui, Li, Yang, and Huang]{noise0}
Guo, X., Liu, J., Cui, M., Li, J., Yang, H., and Huang, D.
\newblock Initno: Boosting text-to-image diffusion models via initial noise optimization.
\newblock In \emph{Proceedings of the IEEE/CVF Conference on Computer Vision and Pattern Recognition}, pp.\  9380--9389, 2024.

\bibitem[Hartman(2002)]{hartman2002ordinary}
Hartman, P.
\newblock \emph{Ordinary differential equations}.
\newblock SIAM, 2002.

\bibitem[He et~al.()He, Murata, Lai, Takida, Uesaka, Kim, Liao, Mitsufuji, Kolter, Salakhutdinov, et~al.]{solve4}
He, Y., Murata, N., Lai, C.-H., Takida, Y., Uesaka, T., Kim, D., Liao, W.-H., Mitsufuji, Y., Kolter, J.~Z., Salakhutdinov, R., et~al.
\newblock Manifold preserving guided diffusion.
\newblock In \emph{The Twelfth International Conference on Learning Representations}.

\bibitem[Hong et~al.(2024)Hong, Lee, Jeon, Bae, and Chun]{exactinversion}
Hong, S., Lee, K., Jeon, S.~Y., Bae, H., and Chun, S.~Y.
\newblock On exact inversion of dpm-solvers.
\newblock In \emph{Proceedings of the IEEE/CVF Conference on Computer Vision and Pattern Recognition}, pp.\  7069--7078, 2024.

\bibitem[Karras(2019)]{ffhq}
Karras, T.
\newblock A style-based generator architecture for generative adversarial networks.
\newblock \emph{arXiv preprint arXiv:1812.04948}, 2019.

\bibitem[Ke et~al.(2021)Ke, Wang, Wang, Milanfar, and Yang]{musiq}
Ke, J., Wang, Q., Wang, Y., Milanfar, P., and Yang, F.
\newblock Musiq: Multi-scale image quality transformer.
\newblock In \emph{Proceedings of the IEEE/CVF international conference on computer vision}, pp.\  5148--5157, 2021.

\bibitem[Kong et~al.()Kong, Ping, Huang, Zhao, and Catanzaro]{audio1}
Kong, Z., Ping, W., Huang, J., Zhao, K., and Catanzaro, B.
\newblock Diffwave: A versatile diffusion model for audio synthesis.
\newblock In \emph{International Conference on Learning Representations}.

\bibitem[Krizhevsky et~al.(2009)Krizhevsky, Hinton, et~al.]{cifar-10}
Krizhevsky, A., Hinton, G., et~al.
\newblock Learning multiple layers of features from tiny images.
\newblock 2009.

\bibitem[Kwon \& Ye(2024)Kwon and Ye]{solve2}
Kwon, T. and Ye, J.~C.
\newblock Solving video inverse problems using image diffusion models.
\newblock \emph{arXiv preprint arXiv:2409.02574}, 2024.

\bibitem[Liu et~al.(2024)Liu, Xu, Yang, Zeng, and He]{edit1}
Liu, H., Xu, C., Yang, Y., Zeng, L., and He, S.
\newblock Drag your noise: Interactive point-based editing via diffusion semantic propagation.
\newblock In \emph{Proceedings of the IEEE/CVF Conference on Computer Vision and Pattern Recognition}, pp.\  6743--6752, 2024.

\bibitem[Narasimhan et~al.(2024)Narasimhan, Agarwal, Rout, Shakkottai, and Chinchali]{constrainedtts}
Narasimhan, S.~S., Agarwal, S., Rout, L., Shakkottai, S., and Chinchali, S.~P.
\newblock Constrained posterior sampling: Time series generation with hard constraints.
\newblock \emph{arXiv preprint arXiv:2410.12652}, 2024.

\bibitem[Nichol \& Dhariwal(2021)Nichol and Dhariwal]{improvediffusion}
Nichol, A.~Q. and Dhariwal, P.
\newblock Improved denoising diffusion probabilistic models.
\newblock In \emph{International conference on machine learning}, pp.\  8162--8171. PMLR, 2021.

\bibitem[Rombach et~al.(2022)Rombach, Blattmann, Lorenz, Esser, and Ommer]{ldm}
Rombach, R., Blattmann, A., Lorenz, D., Esser, P., and Ommer, B.
\newblock High-resolution image synthesis with latent diffusion models.
\newblock In \emph{Proceedings of the IEEE/CVF conference on computer vision and pattern recognition}, pp.\  10684--10695, 2022.

\bibitem[Salimans et~al.(2016)Salimans, Goodfellow, Zaremba, Cheung, Radford, and Chen]{IS}
Salimans, T., Goodfellow, I., Zaremba, W., Cheung, V., Radford, A., and Chen, X.
\newblock Improved techniques for training gans.
\newblock \emph{Advances in neural information processing systems}, 29, 2016.

\bibitem[Shoemake(1985)]{shoemake1985animating}
Shoemake, K.
\newblock Animating rotation with quaternion curves.
\newblock In \emph{Proceedings of the 12th annual conference on Computer graphics and interactive techniques}, pp.\  245--254, 1985.

\bibitem[Song et~al.()Song, Kwon, Zhang, Hu, Qu, and Shen]{resample}
Song, B., Kwon, S.~M., Zhang, Z., Hu, X., Qu, Q., and Shen, L.
\newblock Solving inverse problems with latent diffusion models via hard data consistency.
\newblock In \emph{The Twelfth International Conference on Learning Representations}.

\bibitem[Song et~al.(2020)Song, Meng, and Ermon]{song2020denoising}
Song, J., Meng, C., and Ermon, S.
\newblock Denoising diffusion implicit models.
\newblock \emph{arXiv preprint arXiv:2010.02502}, 2020.

\bibitem[Song et~al.(2021)Song, Meng, and Ermon]{song2021denoising}
Song, J., Meng, C., and Ermon, S.
\newblock Denoising diffusion implicit models.
\newblock In \emph{International Conference on Learning Representations}, 2021.
\newblock URL \url{https://openreview.net/forum?id=St1giarCHLP}.

\bibitem[Song et~al.(2023)Song, Zhang, Yin, Mardani, Liu, Kautz, Chen, and Vahdat]{solve3}
Song, J., Zhang, Q., Yin, H., Mardani, M., Liu, M.-Y., Kautz, J., Chen, Y., and Vahdat, A.
\newblock Loss-guided diffusion models for plug-and-play controllable generation.
\newblock In \emph{International Conference on Machine Learning}, pp.\  32483--32498. PMLR, 2023.

\bibitem[Song \& Ermon(2019)Song and Ermon]{song2019generative}
Song, Y. and Ermon, S.
\newblock Generative modeling by estimating gradients of the data distribution.
\newblock \emph{Advances in neural information processing systems}, 32, 2019.

\bibitem[Wang et~al.(2023)Wang, Chan, and Loy]{clipiqa}
Wang, J., Chan, K.~C., and Loy, C.~C.
\newblock Exploring clip for assessing the look and feel of images.
\newblock In \emph{Proceedings of the AAAI Conference on Artificial Intelligence}, volume~37, pp.\  2555--2563, 2023.

\bibitem[Wang et~al.(2024)Wang, Zhang, Zhang, Chen, Ma, and Qu]{subspacecluster}
Wang, P., Zhang, H., Zhang, Z., Chen, S., Ma, Y., and Qu, Q.
\newblock Diffusion models learn low-dimensional distributions via subspace clustering.
\newblock \emph{arXiv preprint arXiv:2409.02426}, 2024.

\bibitem[Wu \& De~la Torre(2023)Wu and De~la Torre]{noise1}
Wu, C.~H. and De~la Torre, F.
\newblock A latent space of stochastic diffusion models for zero-shot image editing and guidance.
\newblock In \emph{Proceedings of the IEEE/CVF International Conference on Computer Vision}, pp.\  7378--7387, 2023.

\bibitem[Xia et~al.(2021)Xia, Yang, Xue, and Wu]{celeba-hq}
Xia, W., Yang, Y., Xue, J.-H., and Wu, B.
\newblock Tedigan: Text-guided diverse face image generation and manipulation.
\newblock In \emph{IEEE Conference on Computer Vision and Pattern Recognition (CVPR)}, 2021.

\bibitem[Zhang et~al.(2023{\natexlab{a}})Zhang, Das, and Kumar]{edit2}
Zhang, J., Das, K., and Kumar, S.
\newblock On the robustness of diffusion inversion in image manipulation.
\newblock In \emph{ICLR 2023 Workshop on Trustworthy and Reliable Large-Scale Machine Learning Models}, 2023{\natexlab{a}}.

\bibitem[Zhang et~al.(2023{\natexlab{b}})Zhang, Rao, and Agrawala]{controlnet}
Zhang, L., Rao, A., and Agrawala, M.
\newblock Adding conditional control to text-to-image diffusion models.
\newblock In \emph{Proceedings of the IEEE/CVF International Conference on Computer Vision}, pp.\  3836--3847, 2023{\natexlab{b}}.

\bibitem[Zheng et~al.(2024)Zheng, Zhang, Fang, Liu, Lian, and Han]{noisediffusion}
Zheng, P., Zhang, Y., Fang, Z., Liu, T., Lian, D., and Han, B.
\newblock Noisediffusion: Correcting noise for image interpolation with diffusion models beyond spherical linear interpolation.
\newblock In \emph{The Twelfth International Conference on Learning Representations}, 2024.

\end{thebibliography}
\bibliographystyle{icml2025}

%%%%%%%%%%%%%%%%%%%%%%%%%%%%%%%%%%%%%%%%%%%%%%%%%%%%%%%%%%%%%%%%%%%%%%%%%%%%%%%
%%%%%%%%%%%%%%%%%%%%%%%%%%%%%%%%%%%%%%%%%%%%%%%%%%%%%%%%%%%%%%%%%%%%%%%%%%%%%%%
% APPENDIX
%%%%%%%%%%%%%%%%%%%%%%%%%%%%%%%%%%%%%%%%%%%%%%%%%%%%%%%%%%%%%%%%%%%%%%%%%%%%%%%
%%%%%%%%%%%%%%%%%%%%%%%%%%%%%%%%%%%%%%%%%%%%%%%%%%%%%%%%%%%%%%%%%%%%%%%%%%%%%%%
\newpage
\appendix
\onecolumn
\section{Proofs}
\subsection{Proof in Section \ref{subsec:linear, DDIM}}
\begin{proof}[Proof of Proposition \ref{prop: linearity, DDIM}]

Let $L_t(\x) := \eta_t \x + \lambda_t \nabla_{\xt}\log p_{t}(\x)$ be the one-step recursion.  Our $\x_0(a,t)$ is formally defined as $L_1\circ L_2 \circ \ldots \circ L_T(a)$.

    The second-order differentiability of $p_t$  implies the score function $\nabla \log p_t$ is first-order differentiable. 
Let $H_t$ be the Hessian matrix of $\log p_t$ ($H_{t}^{i,j} := \partial^2 \log p_t/\partial_i\partial_j$).
We have
\[
\nabla \log p_t (\x) = \nabla \log p_t (\w)  + H_t(\w)(\x -\w) + o(\lVert \x - \w \rVert_2).
\]

Therefore, for any fixed direction $\w$ and $\delta \in \mathbb R$,
\begin{align*}
    L_T(\x + \delta \w) &= \eta_T (\x + \delta \w) + \lambda_T \nabla_{\x}\log p_{T}(\x + \delta \w)\\
    & = \eta_T \x + \lambda_T \nabla_{\x}\log p_{T}(\x) + \lambda_T \delta H_T(\x) \w + \delta\eta_T \w+ o(\delta)\\
    & = L_T(\x) + \delta(\eta_T + \lambda_T H_T(\x))\w + o(\delta)\\
    & = L_T(\x) + \delta\gamma_T(\x) \w + o(\delta)
\end{align*}
where $\gamma_T(\x)$ is defined as
\[
\gamma_T(\x) = \eta_T + \lambda_T H_T(\x),
\]
is a matrix-valued function.

Applying $L_{T-1}$ on both sides of the above formula:
\begin{align*}
   L_{T-1} \circ L_T(\x + \delta \w) &= L_{T-1}\circ( L_T(\x) + \delta\gamma_T(\x) \w + o(\delta))\\
   & = \eta_{T-1}L_{T}(\x) + \delta\eta_{T-1}\gamma_T(\x) \w
   + o(\delta) + \lambda_{T-1}\nabla \log p_{T-1}\bigg(L_T(\x) + \delta\gamma_T(\x) \w + o(\delta))\bigg)\\
   &  = 
   \underbrace{\eta_{T-1} L_T(\x) + \lambda_{T-1}\nabla \log p_{T-1}(L_T(\x))}_\text{recursion on the unperturbed data $\x$} + 
   \underbrace{\delta\eta_{T-1}\gamma_T(\x)\w + 
   \delta\lambda_{T-1}H_{T-1}(L_T(\x))\gamma_T(\x)\w }_\text{linear term} \\
   &\qquad+ \underbrace{o(\delta)}_\text{lower order term}\\
   & = L_{T-1}\circ L_T(\x) + \delta\gamma_{T-1}(\x)\w + o(\delta).
\end{align*}
where 
\[
\gamma_{T-1}(\x) := \left(\eta_{T-1} I + \lambda_{T-1}H_{T-1}\left(L_T(\x)\right)\right) \gamma_T(\x)
\]

We could continue applying $L_{T-2}, L_{T-3} \ldots, L_1$ on the above formula, and conclude:
\begin{align}
    \x_0(\x_T + \lambda\Delta\x, T) = x_0(\x_T) + \lambda\gamma_0(\x_T)\Delta\x + o(\lambda). 
\end{align}
We might be particularly interested in the distance $\lVert  \x_0(\x_T + \lambda\Delta\x, T) -  \x_0(\x_T, T)\rVert$, our calculation directly implies:
\begin{align}
    \lVert  \x_0(\x_T + \lambda\Delta\x, T) -  \x_0(\x_T, T)\rVert_2 = \nonumber \\ \lVert \lambda\gamma_0(\x_T)\Delta\x \rVert_2 + o(\lambda) = \lambda \lVert\gamma_0(\x_T)\Delta\x \rVert_2 + o(\lambda). 
\end{align}
\end{proof}

\subsection{Proof in Section \ref{subsec: linear, ODE}}\label{subsec: proof of linear, ODE}

We first state the detailed assumptions posed in Section \ref{subsec: linear, ODE}. Define the function
\[
h(t, \y) := -\frac12\sqrt{\alpha(t)}\alpha'(t)\nabla\log p_t\left(\frac{\y}{\sqrt{\sigma^2(t) + 1}}\right).
\]
We assume this function has a continuous derivative (i.e., $C^1$) on the whole space $[0,T]\times \mathbb R^m$. Moreover, we assume there exists  $C(t)$ such that:
\[
\lVert h(t, \y) - h(t, \x) \rVert_2 \leq C(t) \lVert \y-\x\rVert_2,
\]
for every $\x, \y, t$, and $\max_{t\in[0,T]} C(t) \leq C <\infty$.

\begin{proof}[Proof of Proposition \ref{prop: linearity ODE}]
    We first show the ODE \eqref{eqn:idealized ODE} exists a unique solution. We can rewrite the \eqref{eqn:idealized ODE} as:
    \begin{align*}
    \diff \bar{\x}_t &= -\sqrt{1-\alpha(t)}\nabla\log p_t\left(\frac{\bar{\x}_t}{\sqrt{\sigma^2(t) + 1}}\right)\diff \sigma(t)\\
    & = -\sigma'(t)\sqrt{1-\alpha(t)}\nabla\log p_t\left(\frac{\bar{\x}_t}{\sqrt{\sigma^2(t) + 1}}\right)\diff t\\
    & = -\frac12\sqrt{\alpha(t)}\alpha'(t) \nabla\log p_t\left(\frac{\bar{\x}_t}{\sqrt{\sigma^2(t) + 1}}\right)\diff t \qquad \text{as} ~~\sigma(t) = \sqrt{(1- \alpha(t))/\alpha(t)}\\
    & = h(t, \bar\x_t) \diff t.
\end{align*}

Given $h(t,\y)\in C^1$ and unifomrly Lipschitz in $\y$, it follows from the Picard-Lindelöf Theorem (e.g., Theorem 1.1 of \cite{hartman2002ordinary}) that our ODE \eqref{eqn:idealized ODE} has a unique solution for any initialization $\bar\x_T = \bar \x$.

Next, it follows from Theorem 3.1 of \cite{hartman2002ordinary} that the solution $\bar\x_0(\bar\x, T) \in C^1$, i.e., the solution depends continuously and differentiably on its initialization \(\bar\x\). Thus,
\begin{align*}
     \bar\x_0(\x_T +\lambda \Delta\x, T) = \bar\x_0(\x_T) + \lambda J_{\bar\x}(\x_T) \Delta\x + o(\lambda),
\end{align*}
where $J_\x$ is the Jabobian matrix of the function $\bar\x_0(\bar\x, t)$ with respect to $\bar \x$. This concludes the proof of Proposition \ref{prop: linearity ODE}.
\end{proof}

\begin{proof}[Proof of Proposition \ref{prop: at most linear ODE}]
Let \(\bar\x_T\) and \(\bar\x_T + \lambda \Delta\x\) be two fixed initializations. Define  
\[
\y_t := \bar\x_t(\bar\x_T) - \bar\x_t(\bar\x_T + \lambda \Delta\x)
\]  
as the difference between the solutions of \eqref{eqn:idealized ODE} at time \(t \in [0, T]\). 

Taking derivative on $\y$ with respect to $t$ yields:
\begin{align*}
    \y'_t = h(t,\bar\x_t(\bar\x_T)) - h(t, \bar\x_t(\bar\x_T +  \lambda \Delta\x) ).
\end{align*}
By the Lipschitz continuity:
\begin{align*}
    \lVert \y'_t \rVert_2 \leq C \lVert \bar\x_t(\bar\x_T) - \bar\x_t(\bar\x_T + \lambda \Delta\x)\rVert_2 = C(t) \lVert \y_t \rVert_2
\end{align*}

Denote $\y_t$ by $(\y_{1,t},\y_{2,t}, \ldots, \y_{m,t})^\top$, we have:
\begin{align*}
    \frac{\diff \lVert \y_t\rVert_2}{\diff t} &=  \frac{\diff \sqrt{\sum_{i=1}^m \y_{i,t}^2}}{\diff t} \\
    & = \frac12 \frac{\sum_{i=1}^m 2 \y_{i,t} \y'_{i,t}}{\sqrt{\sum_{i=1}^m \y_{i,t}^2}}\\
    & = \frac{\sum_{i=1}^m  \y_{i,t} \y'_{i,t}}{\lVert \y_t\rVert_2}\\
    &\leq \frac{\lVert \y_t\rVert_2 \lVert \y'_t\rVert_2}{\lVert \y_t\rVert_2} \qquad\qquad \text{Cauchy-Schwarz inequality}\\
    & = \lVert \y'_t\rVert_2.
\end{align*}
Thererfore, we have
\begin{align*}
     \frac{\diff \lVert \y_t\rVert_2}{\diff t} \leq C(t) \lVert \y_t\rVert_2.
\end{align*}
Applying Grönwall's inequality on the function $\lVert \y_t\rVert$, we have:
\begin{align*}
    \lVert \y_t\rVert_2 \leq \exp\left(\int_t^TC(t)\diff t\right) \lVert \lambda\Delta\x \rVert_2
\end{align*}
for every $0\leq t\leq T$. Taking $t = 0$, we have
 \begin{align*}
        \lVert \bar\x_0(\x_T +\lambda \Delta\x, T) - \bar\x_0(\x_T) \rVert_2 \leq \exp\left(\int_0^TC(t)\diff t\right)\lvert \lambda \rvert \lVert \Delta \x \rVert_2.
    \end{align*}
    as claimed in Proposition \ref{prop: at most linear ODE}.
    \end{proof}
\subsection{Proof in Section \ref{sec: sampling with control}}
\begin{proof}[Proof of Proposition \ref{prop: centering feasibility}]
    It is known 
    \begin{align*}
          \bE[\lVert \x + \Delta\x \rVert_2^2] &= \lVert\bE[ \x + \Delta\x ]\rVert_2^2 + \tr(\Cov[\x + \Delta\x])\\
          & = \lVert \x \rVert_2^2 + \tr(\Cov[\Delta\x])\\
          &\geq \lVert \x \rVert_2^2
    \end{align*}
The equality is taken if and only if $\tr(\Cov[\Delta\x]) = \sum_{i} \Var[\Delta\x_i] = 0$. This is equivalent to saying that all components of $\Delta\x$ are  deterministic. Therefore, almost surely, $\Delta\x = \bE[\Delta\x] = 0$.  
\end{proof}

\begin{proof}[Proof of Proposition \ref{prop: CCS control initial distance}]
    Given a standard normal vector $\epsilon$, we claim the following holds:
    \begin{align*}
        \frac{\lVert \eps - \x_T\rVert_2^2}{d} &= \frac{\lVert \eps \rVert_2^2}{d} + \frac{\lVert \x_T\rVert_2^2 }{d} + \frac{-2\epsilon\cdot \x_T}{d}
        \rightarrow 2
    \end{align*}
    in $L^2$ as $d\rightarrow \infty$. To see this, notice the first term is
    \begin{align*}
        \frac{\sum_{i=1}^d\epsilon_i^2}{d}
    \end{align*}
    which converges to $1$ by the law of large numbers, since $\bE[\epsilon_i^2] = 1$. The second term converges to $1$ by our assumption. The last term converges to $0$ in $L^2$ as
    \begin{align*}
        \bE\left[\left\lVert\frac{-2\epsilon\cdot \x_T}{d}\right\rVert^2\right] = \frac{4\bE[\sum_{i}\x_{T,i}^2\bE[\epsilon_i^2]]}{d^2} = \frac{4(d+o(d))}{d^2} \rightarrow 0.
    \end{align*}
Therefore the distance $\lVert \eps - \x_T\rVert_2$ converges to $2\sqrt d$ as $d\rightarrow\infty$. Similarly we can show $\theta(\epsilon,\x_T)$, the angle between $\epsilon$ and $\x_T$ converges to $\pi/2$ as $d\rightarrow \infty$. In other words, $\epsilon$ is approximately orthogonal to $\x_T$ when the dimension $d$ is large. 

Therefore, with probability $1-o(1)$, the angle $\theta$ in Algorithm \ref{alg: CCS Full inverstion} is $\pi/2 \pm o(1)$, and $\lVert \eps - \x_T\rVert_2/2\sqrt d = 1 \pm o(1)$ as $d\rightarrow\infty$. Fix any $M \leq (2-\delta)\sqrt d$, since the spherical interpolation smoothly interpolate between $\x_0$ and $\eps$, there exists a $C$ satisfying 
Algorithm \ref{alg: CCS Full inverstion} with input $C$ output $\x_T'$ with distance $M$ to $\x_T$ with probability $1-o(1)$.  

We can indeed find an explicit $C_0$ with slightly weaker guarantees, set 
\[
C_0  = \cos^{-1}\left(1- \frac{M^2}{2\lVert\x_t \rVert_2^2}\right).
\]
Then with probability $1-o(1)$, $C_0 \in (0,\pi/2)$, and
\begin{align*}
  \left \lVert \frac{\sin(C_0)}{\sin(\theta)} \cdot \epsilon + \frac{\sin(\theta - C_0)}{\sin(\theta)} \cdot \x_T - \x_T\right\rVert \leq \left \lVert \frac{\sin(C_0)}{\sin(\theta)} \cdot \epsilon + \frac{\sin(\theta - C_0)}{\sin(\theta)} \cdot \x_T - \sin(C_0)\epsilon - \sin(\theta-C_0)\x_T\right\rVert\\ + \left \lVert \sin(C_0)\epsilon + \sin(\theta-C_0)\x_T - \x_T \right\rVert\\
\end{align*}
by triangle's inequality. Meanwhile, the first term is $o(\sqrt{d})$ as $\sin(\theta) = \sin(\pi/2 +o(1)) = 1+o(1)$ and $\cos(\theta) = 1-o(1)$. The square of the second term is
\begin{align*}
    \left \lVert \sin(C_0)\epsilon + \sin(\theta-C_0)\x_T - \x_T \right\rVert^2 &= \sin(C_0)^2 \lVert \epsilon \rVert^2 + (1 - \sin(\theta-C_0))^2 \lVert \x_T\rVert^2 + 2\sin(C_0)(\sin(\theta-C_0)-1) \epsilon\cdot \x_T\\
    & = \sin(C_0)^2 (d+o(1)) + (1 - \sin(\theta-C_0))^2 (d+o(1)) + o(d)\\
\end{align*}
The last term is $o(d)$ as $\epsilon\cdot\x_T/d \rightarrow 0$ as we analyzed above. Using again $\theta =\pi/2 + o(1)$, we know $\sin(\theta-C_0) = \sin(\pi/2-C_0) + o(1) = \cos(C_0) + o(1)$.
Hence we clean the above equation: 
\begin{align*}
     \left \lVert \sin(C_0)\epsilon + \sin(\theta-C_0)\x_T - \x_T \right\rVert^2 &=d( \sin(C_0)^2 + (1-\cos(C_0))^2) + o(d)\\
     & = d(2 - 2\cos(C_0)) + o(d)\\
     & = d\left(2 - 2 + \frac{M^2}{\lVert\xt\rVert_2^2}\right) + o(d)\\
     &  = M^2 + o(d),
\end{align*}
where the last equality follows from $\lVert \x_T\rVert_2^2 = d + o(1)$. Finally, taking the square root and plugging back into the triangle inequality, we have:
\[
 \left \lVert \x_T'- \x_T\right\rVert = M + o(\sqrt{d}).
\]
\end{proof}%\end{proof}
\section{Additional Results and Experiments}

In this section, we clarify some implementation details, providing more details on algorithms and visualization. 
\subsection{More Implementation Details
}
For the P-CCS algorithm, In the Stable Diffusion experiments, we found that increasing $t_0$ to $T$ causes PSNR to drop. The reason is DDIM inversion is inexact in a CFG setting. However, by decreasing $t_0$ we increases the PSNR, but loss in diversity. We find at $t_0$ in the range of 40 to 48 is an ideal range, with $T = 50$, in which we choose $t_0 = 45$ based on tuning on one validation target mean image, which we exclude in our benchmarking.

For all methods, we set the batch size for controller tuning to be 24, and max number of iterations for tuning to be 6. Due to the linearity property of our proposed perturbation method, our controller can converge in fewer than 4 iterations in most cases. We observe DPS require significant more iterations, since the effect of changing scale has a very non-linear effect on the distance.
\begin{itemize}
    \item CCDF: Similar to \cite{ccdf}'s approach, we use DDPM forward on the target mean with a $t_0$, such that $x_t = \sqrt{\alpha_{t_0}} x_0 + \sqrt{1 - \alpha_t}\epsilon$, where $\epsilon$ is standard Gaussian noise. Then we run DDIM sampling start from $x_t$ and $t_0$. The controller algorithm automatically tune $t_0$ since a larger $t_0$ gives more diverse outputs. By this trend, we are able to perform binary search using our controller. Nevertheless, the number of timesteps is discrete, and limit controlling in a continuous range, also it is highly nonlinear. 
    \item DPS: We use the original codebase from DPS \cite{dps}, the measurement function we use is an identical mapping without noise. The samping step is given by $\mathbf{x}_{i-1} \gets \mathbf{x}'_{i-1} - \zeta_i \nabla_{\mathbf{x}_i} \|\mathbf{y} - \mathcal{A}(\hat{\mathbf{x}}_0)\|_2^2$. It mentions that by increasing the gradient scale $\zeta_i$, we can adjust between the sample diversity and faithness to measurement. However, there is no upper-bound fo $\zeta_i$, which makes it difficult for binary search. So we empirically set $max(\zeta_i) = 0.5$, and run the controller algorithm.

    \item GP: We add a random Gaussian perturbation to the initial noise, the larger the scale the more deviation from target mean, so we design a controller based on this.
    \item ILVR: We use the original codebase from ILVR\cite{ilvr}, note that the larger the downsampling factor $N$, the more deviation from input image, so we use this as an controller, setting $N_{max} = 64$. 
\end{itemize}

\subsection{More Algorithms}

Algo.~\ref{alg: CCS partial inverstion} demonstrates using P-CCS on Stable Diffusion.

Algo.~\ref{alg: PCCS-edit} demonstrates using P-CCS for controllable image editing sampling.
\begin{algorithm}
\caption{(Partial Inversion) CCS Sampling with Stable Diffusion}\label{alg: CCS partial inverstion}

\textbf{Requires: } target mean $x_0$, perturbation scale $C_0$, inversion time steps $t_0$, Encoder $\mathcal{E}$ and Decoder $\mathcal{D}$, a prompt $c$.

\textbf{Step 0:} Compute $\z_0 = \mathcal{E}(x_0)$, then compute the DDIM inversion of $\z_0$, i.e. $\z_T = \text{DDIM}^{-1}(\z_0, 0, t
_0, c)$

\textbf{Step 1:} Compute the noise from $\z_{t_0}$, by $\epsilon_{t_0} = \z_{t_0} - \sqrt{\alpha_{t_0}} \cdot \z_0$

\textbf{Step 2:} Sample noise $\epsilon \sim \bN(0,1 -\alpha_t)$. Then compute \[\theta = \cos^{-1}\left(\frac{\epsilon \cdot \epsilon_{target}}{||\epsilon||_2||\epsilon_{target}||_2}\right)\]

\textbf{Step 3:} 

Compute $\epsilon_{t_0}'$ using spherical interpolation formula:
\[
\epsilon_{t_0}' = \frac{\sin(C_0)}{\sin(\theta)} \cdot \epsilon + \frac{\sin(\theta - C_0)}{\sin(\theta)} \cdot \epsilon_{t_0}
\]

\textbf{Step 4:}
Compute $\z_{t_0}' = \sqrt{\alpha_{t_0}}\cdot \z_0 + \epsilon_{t_0}'$

\textbf{Step 5:} Output sample $\x_0' = \mathcal{D}(\z_0') = D(\text{DDIM}(\z_{t_0}', t_0, 0, c))$
\end{algorithm}

\begin{algorithm}
\caption{CCS Image Editing Sampling with Stable Diffusion}\label{alg: PCCS-edit}

\textbf{Requires: } target mean $x_0$, perturbation scale $C_0$, inversion time steps $t_0$, Encoder $\mathcal{E}$ and Decoder $\mathcal{D}$, source prompt $c_s$, target prompt $c_t$.

\textbf{Step 0:} Compute $\z_0 = \mathcal{E}(x_0)$, then compute the DDIM inversion of $\z_0$, i.e. $\z_T = \text{DDIM}^{-1}(\z_0, 0, t
_0, c_s)$

\textbf{Step 1:} Compute the noise from $\z_{t_0}$, by $\epsilon_{t_0} = \z_{t_0} - \sqrt{\alpha_{t_0}} \cdot \z_0$

\textbf{Step 2:} Sample noise $\epsilon \sim \bN(0,1 -\alpha_t)$. Then compute \[\theta = \cos^{-1}\left(\frac{\epsilon \cdot \epsilon_{target}}{||\epsilon||_2||\epsilon_{target}||_2}\right)\]

\textbf{Step 3:} 

Compute $\epsilon_{t_0}'$ using spherical interpolation formula:
\[
\epsilon_{t_0}' = \frac{\sin(C_0)}{\sin(\theta)} \cdot \epsilon + \frac{\sin(\theta - C_0)}{\sin(\theta)} \cdot \epsilon_{t_0}
\]

\textbf{Step 4:}
Compute $\z_{t_0}' = \sqrt{\alpha_{t_0}}\cdot \z_0 + \epsilon_{t_0}'$

\textbf{Step 5:} Output sample $\x_0' = \mathcal{D}(\z_0') = D(\text{DDIM}(\z_{t_0}', t_0, 0, c_t))$
\end{algorithm}

\subsection{More Figures}

Fig.~\ref{fig:sdccs} demonstrates example of applying P-CCS with SD1.5 on the Celeba-HQ dataset, we demonstrate that our algorithm can work well on in-the-wild images which are very different from the training. 

Fig.~\ref{fig:linear1} demonstrates the linear trend for each target mean on the FFHQ dataset. 

Fig.~\ref{fig:image-edit},\ref{fig:image-edit2} demonstrates an example of image editing controlled sampling with Alg.~\ref{alg: PCCS-edit}

\begin{figure*}
    \centering
    \includegraphics[width=1.0\linewidth]{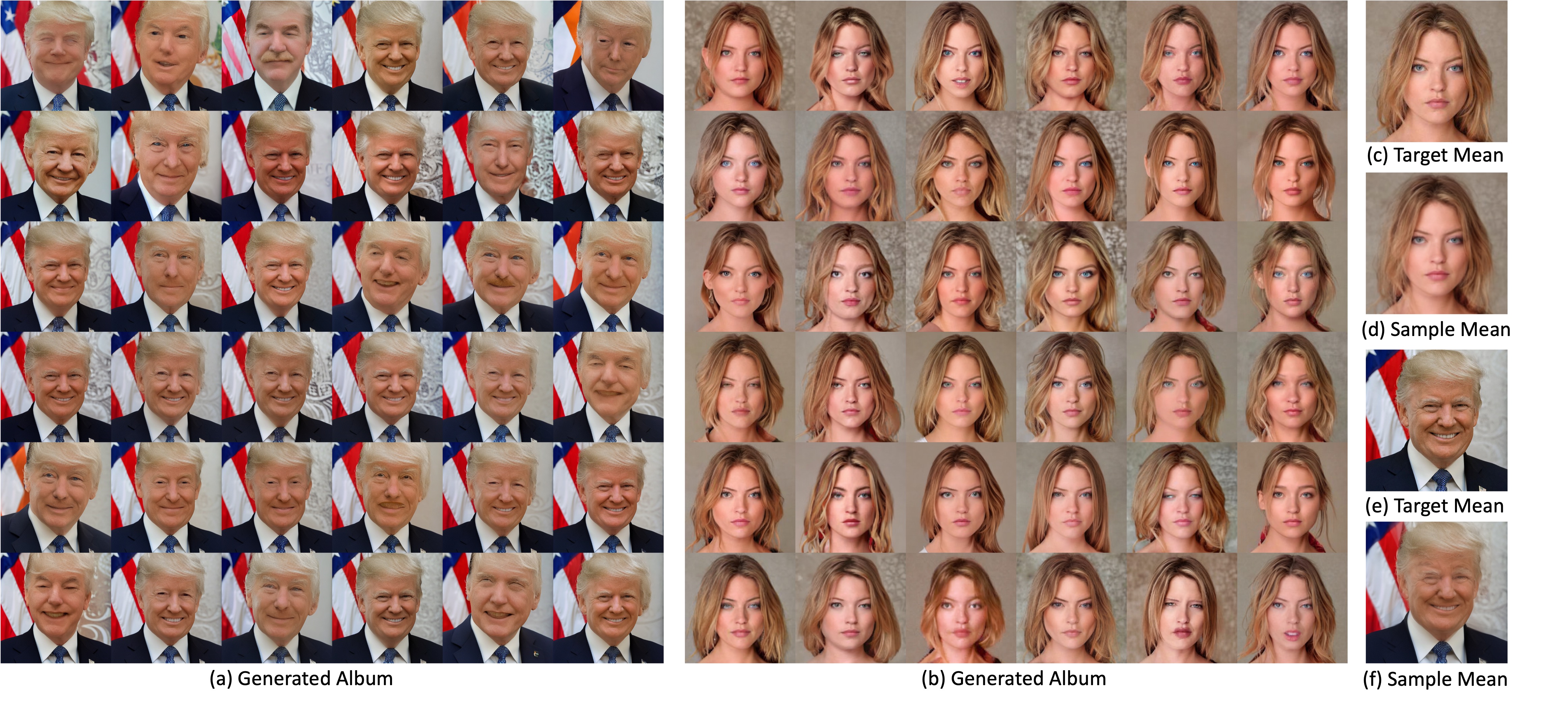}
    \caption{CCS-CT Sampled Images with Stable Diffusion 1.5. (a) Samples with an in-the-wild target mean (e) and a target rMSE 0.09; (b) Samples with a target mean (d) from Celeba-HQ dataset and a target rMSE 0.07; (d): sample mean from (a); (f): sample mean from (b). }
    \label{fig:sdccs}
\end{figure*}

\begin{figure*}
    \centering
    \includegraphics[width=1.0\linewidth]{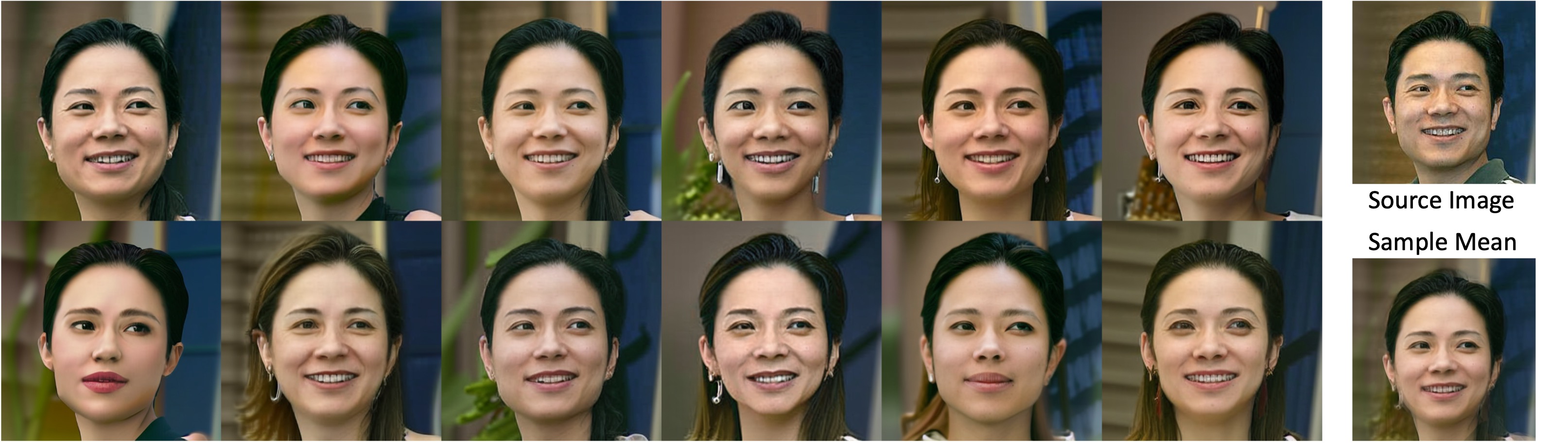}
    \caption{Image Editing Samples with Stable Diffusion 1.5, the source prompt is given by `a high-quality portrait of a man', and the target prompt is given by `a high-quality portrait of a woman', the target MSE level is given by 0.10}
    \label{fig:image-edit}
\end{figure*}

\begin{figure*}
    \centering
    \includegraphics[width=1.0\linewidth]{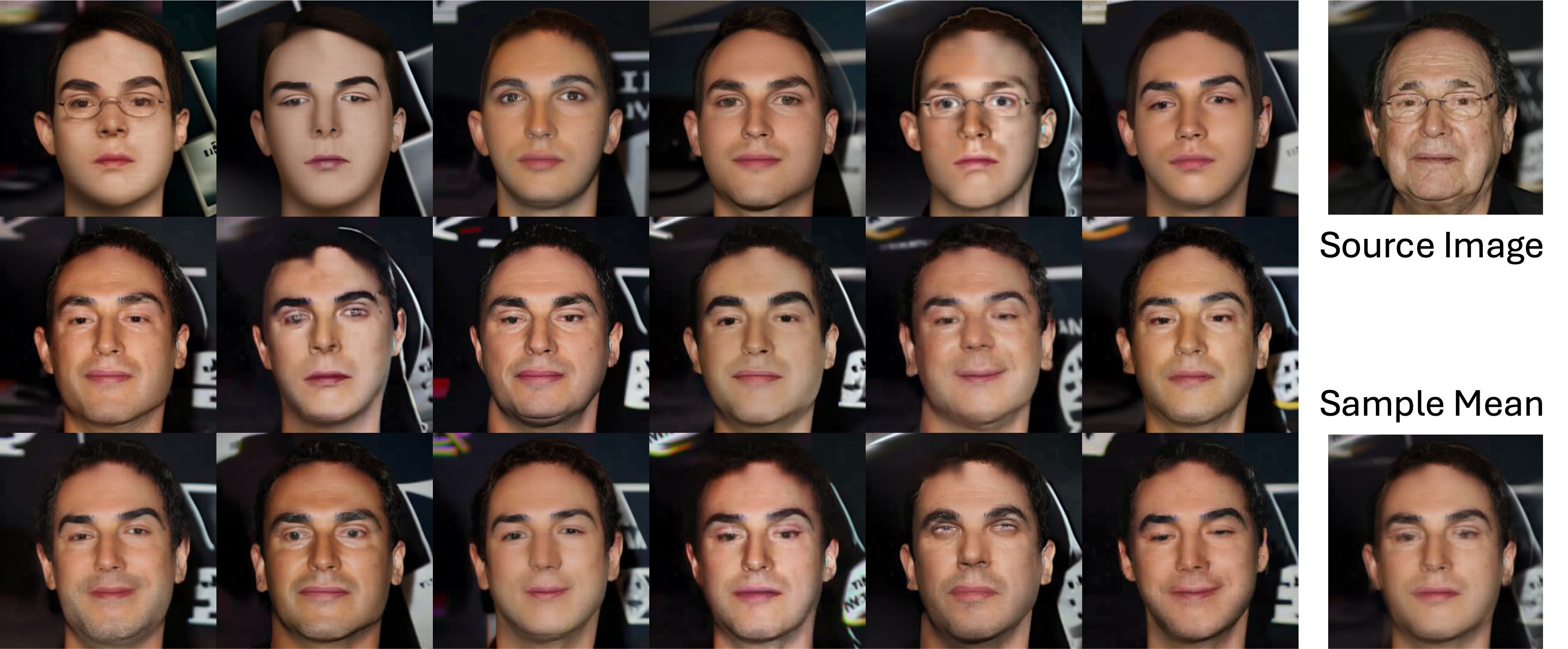}
    \caption{Image Editing Samples with Stable Diffusion 1.5, the source prompt is given by `a high-quality portrait of an old man', and the target prompt is given by `a high-quality portrait of a young man', the target MSE level is given by 0.09}
    \label{fig:image-edit2}
\end{figure*}

\begin{figure}
    \centering
    \includegraphics[width=0.7\linewidth]{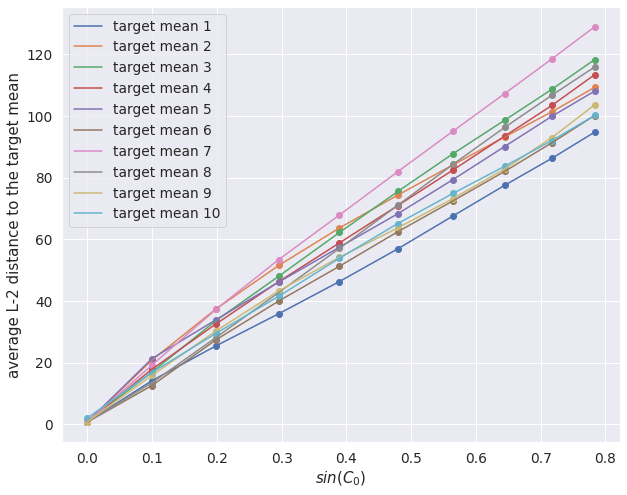}
    \caption{We first sample 50 images around each target mean in the FFHQ dataset. We then obtain the DDIM inverse of each target mean, and then add spherical perturbation to it. When the scale of perturbations $\sin(C_0)$ increases, the average of norms of the residuals between each sample and the target mean approximately increases linearly.}
    \label{fig:linear1}
\end{figure}

%%%%%%%%%%%%%%%%%%%%%%%%%%%%%%%%%%%%%%%%%%%%%%%%%%%%%%%%%%%%%%%%%%%%%%%%%%%%%%%
%%%%%%%%%%%%%%%%%%%%%%%%%%%%%%%%%%%%%%%%%%%%%%%%%%%%%%%%%%%%%%%%%%%%%%%%%%%%%%%

\end{document}